\newcommand{\PA}{\mathrm{pa}}
\newcommand{\CH}{\mathrm{ch}}
\newcommand{\AN}{\mathrm{an}}
\newcommand{\DE}{\mathrm{de}}
\newcommand{\ADJ}{\mathrm{adj}}
\newcommand{\MB}{\mathrm{mb}}
\newcommand\independent{\protect\mathpalette{\protect\independenT}{\perp}}
\def\independenT#1#2{\mathrel{\rlap{$#1#2$}\mkern2mu{#1#2}}}
\DeclareMathOperator*{\argmin}{arg\,min}
\newcommand{\iid}[0]{i.i.d.\xspace}
\newcommand{\norm}[1]{\lVert{#1}\rVert}
\newcommand{\PP}[1]{\mathbb{P}\left\{{#1}\right\}} 
\def\R{\mathbb{R}}
\newcommand{\dna}{\not\rightsquigarrow}
\tikzstyle{c1}=[circle,thick,auto,draw,minimum size=0.5mm,fill=lightgray!10]
\theoremstyle{plain}
\newtheorem{theorem}{Theorem}
\newtheorem{lemma}{Lemma}
\newtheorem{definition}{Definition}
\newtheorem{corollary}{Corollary}
\theoremstyle{definition}
\let\emptyset\varnothing
\title{Definite Non-Ancestral Relations and Structure Learning}
\author[1]{\href{mailto:Wenyu Chen <wenyuc@uw.edu>?Subject=Your UAI 2021 paper}{Wenyu Chen}{}} 
\author[2]{Mathias Drton}
\author[3]{Ali Shojaie}
\affil[1]{%
    Department of Statistics\\
    University of Washington\\
    Seattle, WA, USA
}
\affil[2]{%
    Department of Mathematics\\
    Technical University of Munich\\
    M\"unchen, Germany
}
\affil[3]{
Department of Biostatistics\\
    University of Washington\\
    Seattle, WA, USA}
\begin{document}
	
\maketitle

\begin{abstract}
In causal graphical models based on directed acyclic graphs (DAGs), directed paths represent causal pathways between the corresponding variables.  The variable at the beginning of such a path is referred to as an ancestor of the variable at the end of the path.  Ancestral relations between variables play an important role in causal modeling.  
In existing literature on structure learning, these relations are usually deduced from 
learned structures and 
used for orienting edges \citep{Claassen2011} or formulating  constraints of the
space of possible DAGs \citep{Magliacane_Claassen_Mooij_2017}.
However, 
they are
usually not posed as immediate target
of inference.
In this work we investigate the graphical characterization of ancestral relations via CPDAGs and d-separation relations. 
We propose a framework that can learn 
definite non-ancestral relations without first learning the skeleton. This framework yields structural information that can be used in both score- and constraint-based algorithms to learn causal DAGs more efficiently. 
\end{abstract}

	\section{Introduction}
	Directed Acyclic Graphs (DAGs) are commonly used  as models  of causal relations between random variables in complex systems \cite{Pearl09,sprites2000}. 
	In this framework, every random variable is modeled to be a function of  other random variables (its causes) and  stochastic noise.  A DAG is used to represent the resulting model for the system of all variables.  The vertices of this DAG represent the random variables and edges represent  direct causal effects.
	The DAG aids, in particular, in understanding conditional independences that the model imposes on the joint distribution of the random variables.  These conditional independences provide an alternative characterization of the joint distribution and can be read off the graph using the concept of d-separation.  If a joint distribution satisfies all these imposed conditional independences, it is said to be Markov with respect to the DAG.
    Structure learning from observational data is then the problem of learning a DAG from  data sampled independently from a distribution that is Markov with respect to the DAG. 
    
    A crucial aspect of structure learning stems from the fact that the data-generating DAG
    may be non-identifiable:  	Many different DAGs may yield the same statistical
	model for the observational data at hand.  These DAGs form a Markov equivalence class, which is then the object to be learned.
	The Markov equivalence class can be uniquely recovered from conditional independences
	among the corresponding random variables, and it can be represented via a completed partially directed acyclic graph (CPDAG) \cite{sprites2000}.  
	Members of a Markov equivalence class share the same adjacencies  and 
	unshielded colliders \cite{Andersson1997},
	which are represented explicitly in the CPDAG. 
	However, there are also common structures that are implicit.

	In this work, we study \emph{definite ancestral} and \emph{definite non-ancestral} 
	relations, which are ancestral 
	and non-ancestral relations  
	shared by all members of a Markov equivalence class.
	We provide graphical interpretations of these relations in the CPDAG, 
	and we also provide a framework for reliably learning definite non-ancestral (DNA) relations without the need to recover the skeleton.  
	These relations not only provide causal interpretations (in the form of ``change in $X$ definitely does not cause change in $Y$''), 
	but also facilitate
	further structure learning.  Indeed, we show that learning DNA relations directly can greatly improve the statistical and computational efficiency of existing structure learning algorithms. 
	
	


	
	Existing structure learning methods can be broadly categorized into constraint-based, score-based, and hybrid approaches. 
    In constraint-based approaches, 
    a DAG is learned 
    via tests of 
    conditional independence. 
    In score-based approaches, 
    a score, for example BIC, is assigned to each DAG and then an algorithm
    searches for the DAG that 
    optimizes the score.  
    Hybrid approaches use schemes in which the two approaches inform each other. 
    In this paper, we will focus on two prominent examples of learning algorithms:  The PC algorithm \cite{sprites2000} and the Sparsest Permutation algorithm \citep{Raskutti2013,Solus2020}.
	The PC algorithm is the default constraint-based method and 
	hierarchically performs tests of 
	conditional independence  with conditioning sets of increasing size. 
	Under a \emph{faithfulness assumption}, the population version of PC algorithm outputs the correct CPDAG \cite{sprites2000}.  At the population level (i.e., with ``infinite data''), the faithfulness assumption merely requires that the conditional independences in the data-generating distributions coincide (to sufficient order) with d-separation relations in the DAG.  	However, good  performance
	of the sample version is only guaranteed when the assumption is strengthened to bound signals of conditional dependence away from zero, which is far more restrictive \citep{kalisch2007,Uhler2013}.  
    On the other hand, the Sparsest Permutation algorithm \citep{Raskutti2013,Solus2020}  
	is a hybrid learning method that 
	searches among all topological orderings.
	For each ordering, a DAG is inferred via conditional independence tests given the  ordering, and the number of edges in the DAG is used as the score. 
	The algorithm looks for the sparsest DAG under which the data-generating 
	distribution is Markov.
	The SP algorithm relies on weaker distributional 
	assumptions than PC, but comes at increased computational cost due to its score-based searching scheme. 
	
	As noted above, we propose here an algorithm to directly learn DNA relations.  This algorithm is constraint-based and derives the relations from conditional independences/dependences. 
	The learning algorithm is based on two well-known rules of conditional independence that has been studied extensively in \cite{entner2013,Claassen2011,claassen2013,Magliacane_Claassen_Mooij_2017}.
	We will then show that the learned DNA relations provide  \textit{order-constraining} information. 
	In other words, they define a subset of
	all possible topological orderings (or DAGs)
	that is guaranteed to contain one correct ordering (or DAG). 
	Therefore, DNA relations can be used to
	reduce the number
    of conditional independence tests needed for running the PC algorithm.
    Similarly, the ordering information provided by the DNA relations can also help significantly reduce the search space of the SP algorithm,
    which grows exponentially with graph size.
    Regarding this last point, we would like to highlight an independent preprint  \cite{Squires2020} that uses information from the moral graph to reduce the search space of SP.  Compared with that work, our framework of DNA relations is more general and can accommodate weaker assumptions.

	\section{Graphical preliminaries}
	Our exposition follows standard terminology in graphical modeling; see e.g.~Part I of \cite{gm_handbook}.  Here, we briefly summarize the most important concepts. 
	
		Let $G=(V,E)$ be a graph with vertex set $V$ and
	edge set $E$. 
	Then $G$ is directed (or undirected) if $E$ contains only
	directed (or only undirected) edges; these we denote simply as $u\to v$ (or $u-v$).  A mixed graph may contain both 
	directed and undirected edges.  Throughout, and even when considering mixed graphs, we will only consider graphs that are simple, i.e., there may be at most one edge between any pair of nodes.  Two nodes in a graph are \textit{adjacent} if they are linked by an edge.  The \textit{skeleton} of a (possibly mixed) graph $G$ is the undirected graph that has an edge between any pair of nodes that are adjacent in $G$.  A set of nodes is a \textit{clique} if any two nodes in the set are adjacent.

	A \textit{path} in a graph is a
	sequence of distinct and adjacent vertices.
	A \textit{directed path} is a path along directed edges following the
	arrowheads.  Adding an additional directed edge back to the first node
	gives a \textit{directed cycle}.  
	A \textit{directed acyclic
		graph (DAG)} is a directed graph without
	directed cycles.
	If a graph $G$ contains
	the edge $u\to v$, then $u$ is a \textit{parent} of its \textit{child}
	$v$.  
	We write $\PA(G,v)$ for the set of all parents of a node $v$.
	Similarly, $\CH(G,v)$ is the set of children of $v$.  
	If there is  a
	directed path $u \rightarrow \cdots \rightarrow v$, then $u$ is an
	\textit{ancestor} of its \textit{descendant} $v$. 
	The sets of ancestors
	and descendants of $v$ are denoted $\AN(G,v)$ and $\DE(G,v)$, respectively. By convention,
	$v\in\AN(G,v)$ and $v\in\DE(G,v)$. 
	A set of nodes $C$ is ancestral
	 if $\AN(G,v)\subseteq C$ for all $v\in C$. When clear, we may drop the indication of the graph $G$, writing, e.g., $\AN(v)$ only.

	A triple of vertices $(u,v,w)$ is \textit{unshielded} if $v$ is
	adjacent to both $u$ and $w$, but $u$ and $w$ are not adjacent.  A
	non-endpoint vertex $v$ on a path $\pi$ is a \textit{collider} on the
	path if both the edges preceding and succeeding it have an arrowhead
	at $v$. A non-endpoint vertex $v$ on a path $\pi$ which is not a
	collider is a \textit{non-collider} on the path.  An unshielded triple
	$(u,v,w)$ is called a \textit{v-structure} if $v$ is a collider on the
	path $(u,v,w)$.  The \textit{Markov blanket} of a vertex, 
	denoted $\MB(u)$, is the 
	union of vertices connecting to it through either an edge, or 
	a v-structure.  For a choice of two vertices $u$ and $v$, let $C\subset V\setminus\{u,v\}$.  If there is a path from vertex $u$ to vertex $v$ such that all of its colliders are ancestors of an element of $C$ and all its non-colliders are outside $C$, then $u$ and $v$ are \textit{d-connected} given $C$.  If this is not the case, then $u$ and $v$ are \textit{d-separated} given $C$. 
	
	\section{Definite Non-Ancestral Relations}
	A DAG $G$ is usually not uniquely identifiable from the distribution of observational data.  Instead, there may be other graphs $G'$ that entail the exact same d-separation relations. Together, these graphs form the \textit{Markov equivalence class}, $[G]$. Importantly,
 the graphs in  $[G]$ share their adjacencies and possibly also 
	some of their edge marks.  In other words, the equivalent graphs $G'\in[G]$ may differ only through reversals of directed edges and some of the directed edges may in fact be oriented the same way
	in all members of $[G]$; these latter set of edges can be unambiguously interpreted as direct causal effects. 
	The characterization and discovery of such  edges is well-studied \citep{Andersson1997,Meek1995}.
	
		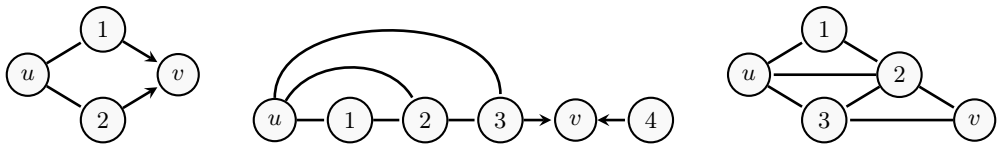
\begin{figure*}
		\centering
		\begin{tikzpicture}[> = stealth,shorten > = 1pt, auto,node distance = 1cm, semithick]
		
		\tikzstyle{every state}=[
		draw = black,
		thick,
		fill = white,
		minimum size = 2mm
		]
		
		\node[c1] (1) at(0,0) {\small$u$};
		\node[c1] (2) at(1,0.6) {\small1};
		\node[c1] (3) at(1,-0.6) {\small2};
		\node[c1] (4) at(2,0) {\small$v$};
		
		\draw[-,line width= 1] (1) -- (2);
		\draw[->,line width= 1] (2) -- (4);
		\draw[-,line width= 1] (1) -- (3);
		\draw[->,line width= 1] (3) -- (4);
		\end{tikzpicture}\qquad
		\begin{tikzpicture}[> = stealth,shorten > = 1pt, auto,node distance = 1cm, semithick]
		
		\tikzstyle{every state}=[
		draw = black,
		thick,
		fill = white,
		minimum size = 2mm
		]
		
		\node[c1] (1) at(0,0) {\small$u$};
		\node[c1] (2) at(1,0) {\small1};
		\node[c1] (3) at(2,0) {\small2};
		\node[c1] (4) at(3,0) {\small3};
		\node[c1] (5) at(4,0) {\small$v$};
		\node[c1] (6) at(5,0) {\small4};
		
		\draw[-,line width= 1] (1) -- (2);
		\draw[-,line width= 1] (2) -- (3);
		\draw[-,line width= 1] (3) -- (4);
		\draw[->,line width= 1] (4) -- (5);
		\draw[<-,line width= 1] (5) -- (6);
		\draw[-,line width= 1] (1) to [out=60,in=120] (3);
		\draw[-,line width= 1] (1) to [out=90,in=90] (4);
		\end{tikzpicture}\qquad
				\begin{tikzpicture}[> = stealth,shorten > = 1pt, auto,node distance = 1cm, semithick]
		
		\tikzstyle{every state}=[
		draw = black,
		thick,
		fill = white,
		minimum size = 2mm
		]
		
		\node[c1] (1) at(0,0) {\small$u$};
		\node[c1] (2) at(1,0.6) {\small1};
		\node[c1] (3) at(2,0) {\small2};
		\node[c1] (4) at(1,-0.6) {\small3};
		\node[c1] (5) at(3,-0.6) {\small$v$};
		
		\draw[-,line width= 1] (1) -- (2);
		\draw[-,line width= 1] (2) -- (3);
		\draw[-,line width= 1] (3) -- (4);
		\draw[-,line width= 1] (3) -- (1);
		\draw[-,line width= 1] (3) -- (5);
		\draw[-,line width= 1] (1) -- (4);
		\draw[-,line width= 1] (5) -- (4);
		\end{tikzpicture}
		\caption{In the graph on the left, 
		$A_{uv}=\{1,2\}$ is not a clique, so $u$ must have an arrowhead to one of them, and thus $u\rightsquigarrow v$ as suggested by Lemma~\ref{lem:cpdagda}.
		In the middle graph, $A_{uv}=\{3\}$, 
		which is trivially a clique, $u$ is not
		definitely ancestral to $v$. In 
		the last graph, $A_{uv}=\{2,3\}$,
		which forms a clique, and 
		it is possible to orient both 
		$2$ and $3$ into $u$ and make
		$u\notin\AN(v)$, and therefore $u$ is not definite ancestral to $v$ according to Lemma~\ref{lem:cpdagda}.}	\label{fig:da}
	\end{figure*}
	Throughout this paper, we will use the adjective `definite' to highlight the structure common to all members of a Markov equivalence class, $[G]$.  With this terminology, the edges that can be oriented the same way in all members of $[G]$ are
	definite (as opposed to incidental) arrows. 
	We similarly define definite ancestral  and definite non-ancestral  relations with regard to causal pathways. 

	\begin{definition}[Definite Ancestral  and Definite Non-Ancestral Relations]
			Let $u,v$ be two nodes in a DAG $G$. 
			 Then $u$ is definite ancestral to $v$, denoted $u\rightsquigarrow v$, if $u\in \AN(G',v)$ for all $G'\in [G]$. 
			 Similarly, $u$ is definite non-ancestral to $v$, or $u\dna v$, if $u\not\in \AN(G',v)$ for all $G'\in [G]$.  
			 When writing $u\rightsquigarrow W$  for a set of nodes $W$, we mean $u\in\AN(G',W)$  for all $G'\in [G]$; when writing $u\dna W$, we mean $u\dna w$  for all $w\in W$.   
			 Finally,  we define 
			 $D^{\rightsquigarrow}(G)=\{(u,v):u\rightsquigarrow v \text{ in }G\}$ and
			 $D^{\dna}(G)=\{(u,v):u\dna v \text{ in }G\}$.
	\end{definition}
	We emphasize that the two notions are not complementary to each other.  While the acyclicity of the graph entails that
	$u\rightsquigarrow v$ implies $ v\dna u$, the converse need not be true.  
	
	In the rest of this section, we present two  
	perspectives of definite ancestral and non-ancestral relations: the CPDAG perspective and the d-separation perspective. 
	The former provides a set of rules to read the (non-)ancestral relations off a CPDAG, and the latter enables efficient learning 
	without knowing the CPDAG.  This second perspective provides a foundation for learning DNA directly from a probability distribution, or data drawn from it.

	\subsection{Ancestral and non-ancestral relations from CPDAGs}

    Let $G$ be any DAG.  The Markov equivalence class $[G]$ can be represented using the completed partially directed acyclic graph (CPDAG) $G^*$.
	 The CPDAG is a mixed graph containing directed and undirected edges 
	 that has the same skeleton as $G$ and whose edges
	 are directed if and only if they have the same orientation in all members of the Markov equivalence class. 
	 The CPDAG representation is complete in the sense that all 
	 directed edges are definite arrows, and for each undirected edge $u-v$, there exists two DAGs in $[G]$ that
	 contain
	 $u\to v$ and $u\leftarrow v$,
	 respectively. 
	 
	 From the CPDAG perspective, definite ancestral and non-ancestral relations can be identified via 
	 \textit{possibly directed paths}, which are paths between 
	 two nodes with no arrow into the initial node. 
	 We say a possibly directed path 
	 is \textit{unshielded} if no three consecutive nodes on the path form a triangle. 
	 Notably, 
	 if there exists a possibly directed 
	 path from $u$ to $v$ in the CPDAG,
	 then some subsequence of this path forms an unshielded
	 possibly directed path \citep{zhang2008, Perkovic2018}. 
	 We now give a comprehensive characterization of definite ancestral and definite non-ancestral relations in terms of the CPDAG; all proofs are given in the supplement.
	 \begin{lemma}\label{lem:cpdagda}
		Let $u$ and $v$ be two nodes in  a CPDAG $G^*$. 
		\begin{itemize}
            \item Let $A_{uv}$ be the collection 
            of all nodes that lie immediately after $u$ on some unshielded possibly directed path to $v$.
			Then $u\rightsquigarrow v$ if and only if either $u$ has a definite arrow into $A_{uv}$ or $A_{uv}$ is 
            not a clique.
			\item  $u\dna v$ if and only if there is no possibly
			directed path from $u$ to $v$ in $G^*$.
		\end{itemize}
	 \end{lemma}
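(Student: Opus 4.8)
The plan is to establish each biconditional by contraposition, handling the non-ancestral statement first because it supplies a tool used throughout. That tool is the \emph{projection principle}: if a member $G'\in[G]$ contains a directed edge $a\to b$, then in the CPDAG $G^*$ this edge is either the definite arrow $a\to b$ or the undirected edge $a-b$, and never $a\leftarrow b$ (the latter would force the opposite orientation in some member, contradicting that the arrows of $G^*$ are definite). Hence every directed path in a member projects, edge by edge, to a possibly directed path in $G^*$ with the same endpoints. I will also use the shared structure of a Markov equivalence class, namely that all members have the same skeleton and the same v-structures \citep{Andersson1997}.

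For the non-ancestral bullet, ($\Leftarrow$) is the contrapositive of the projection principle: if $u\in\AN(G',v)$ for some member, the witnessing directed path projects to a possibly directed path from $u$ to $v$, so one exists in $G^*$. For ($\Rightarrow$) I argue contrapositively: given a possibly directed path from $u$ to $v$ in $G^*$, I reduce it to an unshielded possibly directed path (the reduction recalled above from \citep{zhang2008,Perkovic2018}), orient its undirected edges forward, and extend this to a consistent member $G'\in[G]$ using Meek's rules \citep{Meek1995}. The path is then directed in $G'$, so $u\in\AN(G',v)$ and $u\dna v$ fails.

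For the ancestral bullet I read $u\rightsquigarrow v$ as: every member contains a directed path $u\to\cdots\to v$. The first observation is that, under the stated condition, every member is forced to emit a child into $A_{uv}$. Indeed, for $w\in A_{uv}$ the edge of $G^*$ at $u$ carries no arrowhead at $u$, so two nonadjacent $w_1,w_2\in A_{uv}$ cannot both be oriented into $u$ in any member without producing the v-structure $w_1\to u\leftarrow w_2$, which is absent from $G^*$ and hence from every member; thus a non-clique $A_{uv}$, or a definite arrow $u\to w$ with $w\in A_{uv}$, guarantees some child $w\in A_{uv}$. Fixing such a member and the unshielded possibly directed path $u,w,p_2,\dots,v$ witnessing $w\in A_{uv}$, I propagate the orientation along it: since the triple $(u,w,p_2)$ is unshielded and $u\to w$, orienting $p_2\to w$ would again create an absent v-structure at $w$, so $w\to p_2$; repeating along each unshielded triple yields the directed path $u\to w\to p_2\to\cdots\to v$, proving $u\rightsquigarrow v$. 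For the converse I assume $A_{uv}$ is a clique with all incident edges undirected and no definite arrow, orient every such edge as $w\to u$, and extend to a member $G'$. Any directed path from $u$ to $v$ in $G'$ may be taken shortest, and is then unshielded as a path (a shielded consecutive triple would, by acyclicity, admit a shortcut edge), so its projection is an unshielded possibly directed path whose first node lies in $A_{uv}$; but that node was oriented into $u$, contradicting the first edge of the path. Thus no such path exists and $u\not\rightsquigarrow v$.

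The step I expect to be the main obstacle is verifying that the two explicit orientations I construct actually extend to members of $[G]$: orienting the forward edges of an unshielded path (for the non-ancestral realization) and orienting all $A_{uv}$-edges into $u$ (for the ancestral converse). In each case I must check that the added arrowheads create no directed cycle and no new v-structure, and then invoke the completeness of Meek's rules \citep{Meek1995} to complete the partial orientation to a DAG consistent with $G^*$. The clique hypothesis is exactly what prevents a new collider at $u$ in the second construction, while acyclicity must be argued from standard properties of CPDAGs (chordality of their undirected subgraphs); making this extension rigorous, rather than the path arguments above, is the delicate heart of the proof.
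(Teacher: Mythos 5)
Your overall route coincides with the paper's: project directed paths of members to possibly directed paths in $G^*$, reduce to unshielded possibly directed paths, propagate orientations along unshielded triples via absent v-structures, and, for the clique case, realize a member in which every edge between $A_{uv}$ and $u$ points into $u$. The one step where your plan, as stated, would fail is the extension criterion you propose for that realization. Checking that the added arrowheads create no directed cycle and no new v-structure, and then ``invoking the completeness of Meek's rules,'' is not sufficient for extendability. Concretely: let $G^*$ be the CPDAG consisting of the undirected edges $a-b$, $b-c$, $c-d$ (the equivalence class of a Markov chain); orienting $a\to b$ and $d\to c$ creates no cycle and no new unshielded collider, yet no member of the class contains both orientations, since Meek's first rule forces $b\to c$ from the former and $c\to b$ from the latter. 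Extendability is equivalent to the Meek \emph{closure} of $G^*$ together with the background orientations being conflict-free, so the closure must be analyzed, not just the initial configuration. This is exactly what the paper does: it feeds $B=\{a\to u: a\in A_{uv}\}$ to the background-knowledge algorithm of \citep{Perkovic2018} and argues that, because $A_{uv}$ is a clique, no application of Meek's rules at any step orients an edge between nodes of $A_{uv}$, hence none ever orients an edge from $u$ into $A_{uv}$, so no conflict can arise and the algorithm enforces $B$. Your observation that the clique hypothesis prevents a new collider at $u$ is the right seed, but it must be run through the closure, not applied only to the initial arrowheads.

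Your other flagged extension problem dissolves, and here you can simply follow the paper: for the non-ancestral direction you need not orient every undirected edge of the unshielded possibly directed path and then extend. By completeness of the CPDAG there is a member orienting the \emph{first} edge out of $u$ (or that edge is already a definite arrow), and the propagation argument you yourself give in the ancestral bullet — orienting backward at an unshielded triple would create a v-structure absent from $G^*$ — forces the entire path to be directed in that member, yielding $u\in\AN(v)$ there, so $u\dna v$ fails. Everything else matches the paper, and in one place you are more careful than it is: your shortest-path argument (a shortest directed path in a DAG is unshielded, so its projection is an unshielded possibly directed path whose second node lies in $A_{uv}$, forcing $u\to w$ for some $w\in A_{uv}$ in any member where $u\in\AN(v)$) supplies the detail behind the paper's terse claim that an extension consistent with $B$ has $u\notin\AN(v)$, and is worth keeping.
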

    Figure~\ref{fig:da} shows 
    examples of definite ancestral relations.
 	
 	\subsection{Ancestral and non-ancestral relations from d-separations}
 	Definite ancestral relations are
    easy to interpret, but somewhat delicate to read off the CPDAG. In contrast, definite non-ancestral 
    relations have more subtle interpretations
    but are easier to read off the CPDAG; they are also simpler to learn from $d$-separation relations.
    Next we present 
    well known rules about ancestral effects. 
    (See, e.g., \citep{entner2013,Claassen2011,claassen2013,Magliacane_Claassen_Mooij_2017}.)
	 \begin{lemma}[DA/DNA via d-separation]\label{lem:dna}
	 	Let $G=(V,E)$ be a DAG.  
	 	Let $u,v,x$ be distinct vertices, and let $W\subseteq V\setminus \{u,v,x\}$.
	 	Then the following holds:
	 	\begin{itemize}
	 		\item If $u,v$ are  d-connected given $W$ but d-separated given $W\cup x$, then $x\rightsquigarrow W\cup\{u,v\}$.

	 		\item If $u,v$ are  d-separated given $W$ but d-connected given $W\cup x$, then $x\dna u$, $x\dna v$ and $x\dna W$.
	 		\item If $u,v$ are d-separated marginally (i.e., given $\emptyset$), then $u\dna v$ and $v\dna u$.
	 	\end{itemize} 
	 \end{lemma}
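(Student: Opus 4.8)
The plan is to lean on the fact, already noted in the excerpt, that all members of $[G]$ entail \emph{exactly the same} d-separation and d-connection relations. Consequently every hypothesis of the lemma holds simultaneously in every $G'\in[G]$, so it suffices to fix an arbitrary $G'\in[G]$ and reason about directed paths in that single DAG. Two elementary facts about active paths will drive all three items. First, a directed path from $s$ to $t$ is active given any conditioning set disjoint from its interior, so $s\in\AN(G',t)$ immediately produces d-connection of $s,t$ given small sets. Second, every non-collider on a path active given $W$ lies in $\AN(G',\{u,v\}\cup W)$: following the edge that points away from it along the path, the walk either reaches an endpoint or first meets a collider, and that collider is in $\AN(G',W)$ because the path is active.

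For the first item, fix $G'$ and an active path $\pi$ from $u$ to $v$ given $W$, which exists since $u,v$ are d-connected given $W$. Adjoining $x$ destroys d-connection, so $\pi$ must be blocked by $W\cup\{x\}$; since adding a node lying off $\pi$, or a collider of $\pi$, can never block an already-active path (the former leaves the non-collider condition untouched and can only enlarge the set of ancestors, the latter only helps the collider condition), $x$ must lie on $\pi$ as a non-collider. The second elementary fact then gives $x\in\AN(G',\{u,v\}\cup W)$, i.e.\ $x\rightsquigarrow W\cup\{u,v\}$ after quantifying over all $G'$. The third item I would dispatch directly: if $u\in\AN(G',v)$ for some $G'$, the witnessing directed path $u\to\cdots\to v$ is active given $\emptyset$, contradicting marginal d-separation; hence $u\dna v$, and symmetrically $v\dna u$.

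The second item is the substantive one. Fix $G'$ and a path $\pi$ from $u$ to $v$ blocked given $W$ but active given $W\cup\{x\}$. Activeness given $W\cup\{x\}$ already forces every non-collider of $\pi$ outside $W$, so the block given $W$ must stem from a collider; thus there is a collider $c$ on $\pi$ with $c\in\AN(G',W\cup\{x\})\setminus\AN(G',W)\subseteq\AN(G',x)$, and every such ``newly activated'' collider lies in $\AN(G',x)$. If $x\in\AN(G',w)$ for some $w\in W$, then $c\in\AN(G',x)\subseteq\AN(G',w)\subseteq\AN(G',W)$, contradicting $c\notin\AN(G',W)$; this yields $x\dna W$. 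For $x\dna u$ (and symmetrically $x\dna v$), suppose $x\in\AN(G',u)$. Then each newly activated collider of $\pi$ lies in $\AN(G',x)\subseteq\AN(G',u)$, so on $\pi$ every collider is in $\AN(G',\{u\}\cup W)$ and every non-collider avoids $W$; the rerouting lemma below then makes $u,v$ d-connected given $W$, contradicting the hypothesis.

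The hard part will be that rerouting lemma: a path whose non-colliders avoid $W$ and whose colliders all lie in $\AN(G',\{u,v\}\cup W)$ already certifies d-connection of $u,v$ given $W$. I would prove it by induction on the number of colliders that are ancestors of an endpoint but not of $W$. Picking such a collider $c$ with, say, $c\in\AN(G',u)$, take a directed path $c\to\cdots\to u$ and splice its reverse onto $\pi$ in place of the sub-path from $u$ to $c$; at $c$ this converts the two in-arrows into a pass-through, so $c$ becomes a non-collider, and the spliced segment is a reverse-directed chain, hence contributes no colliders. Moreover none of its interior nodes can lie in $W$, for such a node would be a descendant of $c$ and an ancestor of itself in $W$, forcing $c\in\AN(G',W)$, a contradiction. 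Iterating removes all endpoint-ancestor colliders and leaves a path active given $W$. The genuine bookkeeping is in arguing that excising incidental loops to keep the path simple creates no fresh blocked colliders; this is the standard ``d-connection through ancestors of the endpoints'' argument, and it is where essentially all the technical care resides.
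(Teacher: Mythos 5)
Your proposal is correct, and for the first and third bullets it coincides with the paper's own proof: enlarging the conditioning set can only block an active path through a non-collider entering the set, so $x$ must lie on the chosen $W$-active path as a non-collider, and following the outgoing arrows along the path from $x$ terminates either at an endpoint or at the first collider, which is in $\AN(W)$ by activeness, giving $x\in\AN(G',\{u,v\}\cup W)$ in each $G'\in[G]$ — exactly the paper's argument. For the second bullet your route is the same in outline (locate a collider $c\in\AN(x)\setminus\AN(W)$ on a path newly activated by $x$, then contradict d-separation given $W$ under the assumptions $x\in\AN(W)$ or $x\in\AN(u)$), but you execute it more carefully than the paper does. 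The paper glues the reversed directed paths from $x$ to $u$ and from $t$ to $x$ onto the tail of $\pi$ and simply declares the result d-connecting given $W$; that is immediate only if $t$ is the sole collider of $\pi$ outside $\AN(W)$, whereas the remaining colliders of $\pi$ between $t$ and $v$ may themselves lie in $\AN(x)\setminus\AN(W)$. Your rerouting lemma — a path whose non-colliders avoid $W$ and whose colliders lie in $\AN(\{u,v\}\cup W)$ certifies d-connection given $W$ — is exactly the patch needed, and your inductive splice (the chosen collider becomes a pass-through, the spliced reverse-directed segment contributes no colliders and its interior avoids $W$ since otherwise $c\in\AN(W)$, and the count of problematic colliders strictly drops) is the standard proof of it; the walk-to-path bookkeeping you flag is genuine but classical, and the paper omits it entirely. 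Likewise your derivation of $x\dna W$ (if $x\in\AN(W)$ then $\AN(W\cup\{x\})=\AN(W)$, so $\pi$ would already be active given $W$) makes precise the paper's terse ``for this reason'' step. In short: same decomposition and same mechanisms, with your version explicitly closing a small but real gap in the paper's published argument for the second bullet.
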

	 
    	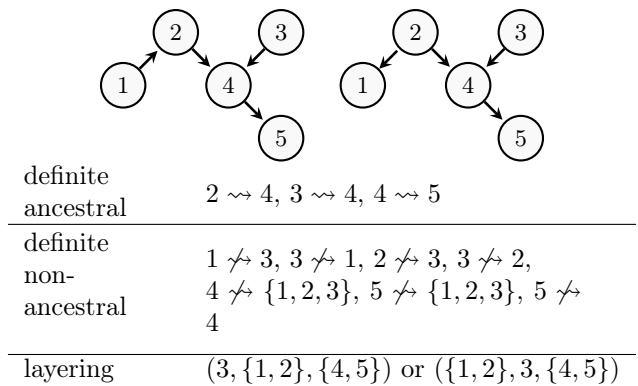
\begin{figure}
		\centering
			\begin{tikzpicture}[> = stealth,shorten > = 1pt, auto,node distance = 1cm, semithick,scale=1]
			
			\tikzstyle{every state}=[
			draw = black,
			thick,
			fill = white,
			minimum size = 2mm
			]
			
			\node[c1] (1) at(-0.7,0) {\small1};
			\node[c1] (2) at(0,0.7) {\small2};
			\node[c1] (4) at(0.7,0) {\small4};
			\node[c1] (3) at(1.4,0.7) {\small3};
			\node[c1] (5) at(1.4,-0.7) {\small5};

			\draw[->,line width= 1] (1) -- (2);
			\draw[->,line width= 1] (2) -- (4);
			\draw[->,line width= 1] (3) -- (4);
			\draw[->,line width= 1] (4) -- (5);
			\end{tikzpicture} \quad
			\begin{tikzpicture}[> = stealth,shorten > = 1pt, auto,node distance = 1cm, semithick,scale=1]
			
			\tikzstyle{every state}=[
			draw = black,
			thick,
			fill = white,
			minimum size = 2mm
			]
			
			\node[c1] (1) at(-0.7,0) {\small1};
			\node[c1] (2) at(0,0.7) {\small2};
			\node[c1] (4) at(0.7,0) {\small4};
			\node[c1] (3) at(1.4,0.7) {\small3};
			\node[c1] (5) at(1.4,-0.7) {\small5};
			
			\draw[<-,line width= 1] (1) -- (2);
			\draw[->,line width= 1] (2) -- (4);
			\draw[->,line width= 1] (3) -- (4);
			\draw[->,line width= 1] (4) -- (5);
			\end{tikzpicture}\vspace{0.01in}
			\begin{tabular}{ll}
					\parbox{2cm}{definite\\ ancestral\vspace{0.1cm}}  & 
						$2\rightsquigarrow 4$,
						$3\rightsquigarrow 4$,
						$4\rightsquigarrow 5$\\ \hline
					\parbox{2cm}{\vspace{0.1cm}definite\\ non-ancestral
					\vspace{0.1cm}}  &
					\parbox{5cm}{$1\not\rightsquigarrow 3$,
	$3\not\rightsquigarrow 1$,
	$2\not\rightsquigarrow 3$,
	$3\not\rightsquigarrow 2$,\\
	$4\not\rightsquigarrow \{1,2,3\}$,
	$5\not\rightsquigarrow \{1,2,3\}$,
						$5\dna 4$}\\ \hline
					layering &
							$(3, \{1,2\},\{4,5\})$ or $( \{1,2\},3,\{4,5\})$
			\end{tabular}
		\caption{A Markov equivalence class comprising two DAGs and its definite ancestral/non-ancestral relations. All DNA except for $5\dna 4$  can be discovered by Lemma~\ref{lem:dna}. }
		\label{tab:ex1}
	\end{figure}	
    
 	 The claims of Lemma~\ref{lem:dna} are illustrated in Figure~\ref{tab:ex1}. 
 	 The following result is
	 a special case of the second statement\footnote{This special case is discussed in \citet{Squires2020}. }:
	 \begin{corollary}\label{cor:dna}
	     If $u,v$ are  d-separated given $V\setminus \{u,v,x\}$ and are d-connected given $V\setminus \{u,v\}$, then  $x$ is a sink, i.e., $x\dna V\setminus \{x\}$.
	 \end{corollary}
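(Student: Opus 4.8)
The plan is to recognize this corollary as a direct specialization of the second bullet of Lemma~\ref{lem:dna}. First I would take the conditioning set in that lemma to be $W = V\setminus\{u,v,x\}$. Because $u,v,x$ are distinct and $x\in V$, we have the set identity $W\cup x = V\setminus\{u,v\}$. Substituting this into the hypotheses of the corollary, the assumption that $u,v$ are d-separated given $V\setminus\{u,v,x\}$ is precisely d-separation given $W$, and the assumption that they are d-connected given $V\setminus\{u,v\}$ is precisely d-connection given $W\cup x$. These are exactly the premises of the second statement of Lemma~\ref{lem:dna}.

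Applying that statement then yields $x\dna u$, $x\dna v$, and $x\dna W$, i.e.\ $x\dna(V\setminus\{u,v,x\})$. Taking the union of these three conclusions gives $x\dna(V\setminus\{x\})$, which is the asserted relation. It remains only to confirm the ``sink'' interpretation: by the definition of definite non-ancestrality, $x\notin\AN(G',w)$ for every $w\in V\setminus\{x\}$ and every $G'\in[G]$. In particular $x$ is a parent of no other node in any member of the equivalence class, so $x$ has no children and is therefore a sink in every $G'$, justifying the phrasing of the statement.

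Since the argument is essentially a bookkeeping specialization of an already-established lemma, I do not anticipate a substantive obstacle. The only point requiring genuine care is the set-theoretic identity $W\cup x = V\setminus\{u,v\}$, which relies on $u,v,x$ being distinct so that deleting $u,v$ from $V$ and then reinstating $x$ recovers exactly $V\setminus\{u,v\}$; without distinctness the two conditioning sets in the hypotheses would not stand in the required $W$ versus $W\cup x$ relationship, and the reduction to Lemma~\ref{lem:dna} would fail.
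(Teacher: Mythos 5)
Your proposal is correct and matches the paper's own justification: the paper proves this corollary simply by noting it is the special case $W = V\setminus\{u,v,x\}$ of the second statement of Lemma~\ref{lem:dna}, exactly the substitution you carry out, with the same bookkeeping that $W\cup x = V\setminus\{u,v\}$ and that $x\dna u$, $x\dna v$, $x\dna W$ together give $x\dna V\setminus\{x\}$. Your added remark on distinctness of $u,v,x$ and the sink interpretation is sound and consistent with the paper's reading.
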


    Note also that the characterization of definite (non-)ancestral relations  is not complete, meaning 
	there exist such relations that do not feature the configurations stated in Lemma~\ref{lem:dna}; see Figure~\ref{fig:dna}.

    In general, Lemma~\ref{lem:dna}  provides a way to identify DNA between two nodes, i.e., in the form of $u\dna v$. 
	However, we cannot identify definite ancestral relations between two nodes (see Figure~\ref{fig:da_counterex}).
	For this reason, 
	we only discuss
	learning 
	and applications of DNA
	in Section~\ref{sec:learning} and \ref{sec:applications}.
	
		\begin{figure}
		\centering
		\begin{tikzpicture}[> = stealth,shorten > = 1pt, auto,node distance = 1cm, semithick]
		
		\tikzstyle{every state}=[
		draw = black,
		thick,
		fill = white,
		minimum size = 2mm
		]
		
		\node[c1] (1) at(0,0) {\small 1};
		\node[c1] (2) at(1,0) {\small$v$};
		\node[c1] (3) at(0.2,0.7) {\small2};
		\node[c1] (4) at(2,0) {\small3};
		\node[c1] (5) at(3,0) {\small$u$};
		
		\draw[->,line width= 1] (1) -- (2);
		\draw[->,line width= 1] (3) -- (2);
		\draw[->,line width= 1] (2) -- (4);
		\draw[->,line width= 1] (4) -- (5);
		\end{tikzpicture}
		\caption{In this  example, $u\dna v$ but cannot be read from the DAG using  Lemma~\ref{lem:dna}.
		}
		\label{fig:dna}
	\end{figure}
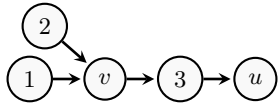
	
	\begin{figure}[t]
		\centering
		\begin{tikzpicture}[> = stealth,shorten > = 1pt, auto,node distance = 1cm, semithick]
		
		\tikzstyle{every state}=[
		draw = black,
		thick,
		fill = white,
		minimum size = 2mm
		]
		
		\node[c1] (1) at(0,0) {\small$v$};
		\node[c1] (2) at(0.8,0.6) {\small$w$};
		\node[c1] (3) at(1.6,0) {\small$v$};
		
		\draw[<-,line width= 1] (1) -- (2);
		\draw[->,line width= 1] (3) -- (2);
		\end{tikzpicture}\quad
		\begin{tikzpicture}[> = stealth,shorten > = 1pt, auto,node distance = 1cm, semithick]
		
		\tikzstyle{every state}=[
		draw = black,
		thick,
		fill = white,
		minimum size = 2mm
		]
		
		\node[c1] (1) at(0,0) {\small$v$};
		\node[c1] (2) at(0.8,0.6) {\small$w$};
		\node[c1] (3) at(1.6,0) {\small$v$};
		
		\draw[<-,line width= 1] (1) -- (2);
		\draw[<-,line width= 1] (3) -- (2);
		\end{tikzpicture}\quad
		\begin{tikzpicture}[> = stealth,shorten > = 1pt, auto,node distance = 1cm, semithick]
		
		\tikzstyle{every state}=[
		draw = black,
		thick,
		fill = white,
		minimum size = 2mm
		]
		
		\node[c1] (1) at(0,0) {\small$v$};
		\node[c1] (2) at(0.8,0.6) {\small$w$};
		\node[c1] (3) at(1.6,0) {\small$v$};
		
		\draw[->,line width= 1] (1) -- (2);
		\draw[<-,line width= 1] (3) -- (2);
		\end{tikzpicture}
		\caption{Three Markov equivalent DAGs, in all of which
		nodes $u$ and $v$ are marginally d-connected and d-separated given $w$; thus
		$w\rightsquigarrow \{u,v\}$. 
		However, 
		this does not imply 
		$w\rightsquigarrow u$ 
		or $w\rightsquigarrow v$. 
		}
		\label{fig:da_counterex}
	\end{figure}
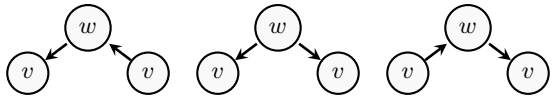

	\section{Learning DNA relations}\label{sec:learning}
    In this section, we discuss how to learn DNA relations from observational data. 
    Let $\Omega_0(\mathbb{P})=\{(u,v,S): u\independent v |S \textnormal{ in }\mathbb{P}\}$ be the collection of 
    all conditional independences, in the form of tuples, that hold in distribution $\mathbb{P}$. We can express the Markov property as follows.
	\begin{definition}[Markov Property]
		A distribution $\mathbb{P}$ is Markov with respect to  a DAG 
		$G$ if
		\[
		u,v\text{ are d-separated by }S \text{ in }G
		\implies (u,v,S)\in\Omega_0(\mathbb{P}).
		\]
	\end{definition}
	The reverse implication is known as the  
	faithfulness condition. 
	If $\mathbb{P}$ is Markov and 
	faithful to $G$, then the Markov equivalence class, $[G]$, can be recovered exactly from $\mathbb{P}$, or from observations obtained from $\mathbb{P}$. 

    Lemma~\ref{lem:dna}  states that  DNA relations can be identified from d-separation and a d-connection relations:
    DNA relations can be correctly learned from the  distribution if d-separation and d-connection relations correspond exactly to conditional independence and conditional dependence. 
    We formalize this condition as DNA-faithfulness. 
	\begin{definition}[DNA-faithfulness]\label{ass:dnafaith}
	    Let 
	    $\Omega$ and $\bar\Omega$ be two collection of triples consisting of two vertices and one set of nodes from a DAG $G$.  We say that a joint distribution $\mathbb{P}$ is 
	    DNA-faithful to  
	    $G$ with respect to  $(\Omega,\bar\Omega)$ if $\Omega\subseteq\Omega_0(\mathbb{P})$,  $\bar\Omega\cap\Omega_0(\mathbb{P})=\emptyset$, and it holds for any three nodes $u,v,z$
	    and set $S\subseteq V\setminus \{u,v,z\}$ that 
	    \begin{align}
	        (u,v,S)\in \Omega
	    \text{ and } (u,v,S\cup z)\in \bar\Omega \implies u,v \text{ are d-separated by } S \text{ in } G.
	    \end{align}
 	\end{definition}

 	We can learn DNA relations by a two-step procedure that first runs an algorithm to learn
	d-separation relations, and then performs additional tests to identify d-connections and learn DNA by Lemma~\ref{lem:dna}. Let $\mathcal{A}$  be an arbitrary constraint-based structure learning 	algorithm that tests and 
	collects a set of conditional independence statements $\Omega_\mathcal{A}(\mathbb{P})$ that hold in a distribution $\mathbb{P}$. 
 	Then we perform additional tests to detect conditional dependences in $\mathbb{P}$, i.e., we form
 	\[
	\bar\Omega_{\mathcal{A}}(\mathbb{P}) = 
	\{(u,v,S\cup z): u\not\independent v|S\cup z, (u,v,S)\in  \Omega_{\mathcal{A}}(\mathbb{P})   \}.
	\]
	We summarize this framework to learn DNA in Algorithm~\ref{alg:dna}.
	
    \begin{algorithm}[t]
		\caption{General DNA-learning framework}
		\label{alg:dna}
		\SetKwInOut{Input}{Input}
		\SetKwInOut{Output}{Output}
		\Input{An arbitrary constraint based algorithm $\mathcal{A}$}
		\Output{A set of DNA $D\subseteq D^{\dna}(G)$}
		$D\gets\emptyset$\;
		Run $\mathcal{A}$, record the conditional independences discovered by $\mathcal{A}$ as
		$\Omega_{\mathcal{A}}$\;
		\For{$(x,y,S)\in \Omega_{\mathcal{A}}$}{
		\For{$z\in V\setminus \{x,y,S\}$}{
		\If{$x\not\independent y|S\cup z$}{
		Record 
		$z\dna x$, $z\dna y$ and $z\dna S$ in $D$}
		}}
		\Return D.
	\end{algorithm}
	
	\begin{theorem}[Correctness of DNA Learning]\label{thm:dnalearning}
	    Let 
	    $\mathbb{P}$ be a distribution Markov to a DAG $G$, 
	    and let 
	    $\mathcal{A}$ be a constraint-based
	    learning algorithm
	    such that 
	    $\mathbb{P}$ is DNA-faithful to $G$ 
	    with respect to $(\Omega_\mathcal{A}(\mathbb{P}),\bar\Omega_\mathcal{A}(\mathbb{P}))$. Then, the output of Algorithm~\ref{alg:dna} is a set of true DNA relations in $G$. 
	\end{theorem}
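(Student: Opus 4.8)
The plan is to show that every relation placed into the output set $D$ is a genuine DNA relation in $G$, by reducing each to a single application of the second bullet of Lemma~\ref{lem:dna}. Fix an arbitrary relation recorded by the algorithm. By inspection of Algorithm~\ref{alg:dna}, such a relation has the form $z\dna x$, $z\dna y$, or $z\dna s$ for some $s\in S$, and it is recorded precisely because there exist a triple $(x,y,S)\in\Omega_\mathcal{A}(\mathbb{P})$ and a node $z\in V\setminus\{x,y,S\}$ with $x\not\independent y|S\cup z$ in $\mathbb{P}$. First I would note that, by the construction of $\bar\Omega_\mathcal{A}(\mathbb{P})$, the dependence $x\not\independent y|S\cup z$ together with $(x,y,S)\in\Omega_\mathcal{A}(\mathbb{P})$ places the triple $(x,y,S\cup z)$ in $\bar\Omega_\mathcal{A}(\mathbb{P})$; moreover the loop ranges guarantee that $x,y,z$ are distinct and $z\notin S$, so the distinctness hypotheses required by Lemma~\ref{lem:dna} (under the identification $u\mapsto x$, $v\mapsto y$, the conditioning node $\mapsto z$, and $W\mapsto S$) are met.

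The two substantive steps are to verify the d-separation and d-connection premises of that bullet. For the d-separation premise, I would invoke DNA-faithfulness of $\mathbb{P}$ with respect to $(\Omega_\mathcal{A}(\mathbb{P}),\bar\Omega_\mathcal{A}(\mathbb{P}))$: its third defining condition, applied to the tuples $(x,y,S)\in\Omega_\mathcal{A}(\mathbb{P})$ and $(x,y,S\cup z)\in\bar\Omega_\mathcal{A}(\mathbb{P})$ identified above, yields exactly that $x$ and $y$ are d-separated by $S$ in $G$. For the d-connection premise, I would use only the Markov property: since $x\not\independent y|S\cup z$ means $(x,y,S\cup z)\notin\Omega_0(\mathbb{P})$, the contrapositive of the Markov implication (d-separation $\Rightarrow$ conditional independence) forces $x$ and $y$ to be d-connected given $S\cup z$ in $G$.

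With both premises established, the second bullet of Lemma~\ref{lem:dna} delivers $z\dna x$, $z\dna y$ and $z\dna S$, which are exactly the relations the algorithm recorded; since the chosen relation was arbitrary, this gives $D\subseteq D^{\dna}(G)$, as claimed.

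The step I expect to be the conceptual crux — as opposed to a mere calculation — is the d-separation premise, and in particular why the weaker DNA-faithfulness assumption suffices in place of full faithfulness. The Markov property alone never licenses passing from a conditional \emph{independence} $x\independent y|S$ back to d-separation, so the independence recorded by $\mathcal{A}$ is not by itself enough. DNA-faithfulness is tailored precisely to supply this missing implication, but only for those tuples that come accompanied by a witnessing dependence at $S\cup z$; the bookkeeping that ensures the tuples produced by the algorithm are exactly the ones to which the third condition of DNA-faithfulness applies is the part I would check most carefully.
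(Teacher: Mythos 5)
Your proof is correct and takes essentially the same approach as the paper: the paper's own proof is a one-line remark that the theorem follows directly from Lemma~\ref{lem:dna} together with DNA-faithfulness applied to the CI statements checked by the algorithm, and your argument simply spells out those details (the membership of $(x,y,S\cup z)$ in $\bar\Omega_\mathcal{A}(\mathbb{P})$, the third DNA-faithfulness condition yielding d-separation given $S$, and the contrapositive of the Markov property yielding d-connection given $S\cup z$, after which the second bullet of Lemma~\ref{lem:dna} applies). No gaps; your identification of the d-separation premise as the place where DNA-faithfulness, rather than full faithfulness, does the work matches the paper's intent exactly.
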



	\subsection{Learning DNA from small d-sep sets}
	
	DNA learning begins by looking for d-separation relations.
	A classic approach for discovering d-separation relations systematically is to hierarchically test for conditional independences given sets of increasing size, $0,1,2,\ldots$.  In particular,  this is the idea behind the PC algorithm.  Adopting this strategy here, we can use the first few rounds of PC to learn d-separations. In other words, we use the PC algorithm with early stopping as the constraint-based procedure $\mathcal{A}$ in Algorithm~\ref{alg:dna}. The procedure is summarized in Algorithm~\ref{alg:dnasmall}.
	
	
	
	
	\begin{algorithm} 
		\caption{DNA-learning via small conditioning sets}
		\label{alg:dnasmall}
		\SetKwInOut{Input}{Input}
		\SetKwInOut{Output}{Output}
		\Input{A conditional independence test, a level $K$}
		\Output{A set of DNA relations $D\subseteq D^{\dna}(G)$}
		Repeat steps 2-5 of Algorithm~\ref{alg:dna} using PC with conditioning sets of size $0,\ldots,K$.\\
		\Return $D$.
	\end{algorithm}
	
	
	The rationale behind stopping the PC early when learning DNA is that the learning procedure $\mathcal{A}$ in Algorithm~\ref{alg:dna} only needs to find correct d-separations. However, false positive edges in the output of $\mathcal{A}$ are allowed. In fact, our empirical results in the supplement (Section~\ref{sec:numioscore})
	show that in most cases a large number of DNA relations can be learned from considering only the first two steps of the PC algorithm.

	

    With a view towards practical algorithms, we now focus on data from a multivariate Gaussian distribution $\mathbb{P}$ and adopt a threshold $\lambda$ to specify signal strengths.  Let $\rho(u,v|S)$ be the partial correlation obtained from the conditional distribution for the pair $(u,v)$ given $S$.  Define
    \begin{align*}
	\Omega_\lambda^{K} (\mathbb{P})&:=\{(u,v,S): |S|= K \text{ and } |\rho(u,v|S)|\leq \lambda\},\\
	\bar\Omega_\lambda^{K} (\mathbb{P})&:=\{(u,v,S): |S|= K \text{ and } |\rho(u,v|S)|> \lambda\}.
	\end{align*}
	Furthermore, let
	\[
	\Omega_\lambda^{\uparrow K} (\mathbb{P}):= \bigcup_{k=0}^K \Omega_\lambda^{k} (\mathbb{P})
	\]
    be the triples obtained 
    from conditioning sets of size at most $K$ and  strength threshold $\lambda$. 
    The next result guarantees the correctness of  Algorithm~\ref{alg:dnasmall}.

    \begin{theorem}[Correctness of DNA learning from small conditioning sets]
        \label{thm:forward}
		Let $G$ be a DAG, and let $\mathbb{P}$ be a distribution 
		that is Markov to $G$ and  
		DNA-faithful to $G$ with respect to 
		$\left(\Omega_\lambda^{\uparrow K}(\mathbb{P}),\bar\Omega_\lambda^{ K+1}(\mathbb{P})\right)$. 
		Then Algorithm~\ref{alg:dnasmall} 
		with  level $K$ returns 
		a correct set of DNA relations. 
	\end{theorem}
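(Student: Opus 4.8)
The plan is to deduce Theorem~\ref{thm:forward} from the general guarantee of Theorem~\ref{thm:dnalearning} by checking that the early-stopped PC procedure $\mathcal{A}$, run with threshold $\lambda$ and conditioning sets of size at most $K$, produces independence and dependence records that are controlled by the sets $\Omega_\lambda^{\uparrow K}(\mathbb{P})$ and $\bar\Omega_\lambda^{K+1}(\mathbb{P})$ from the hypothesis. The first ingredient I would establish is a monotonicity property of DNA-faithfulness: if $\mathbb{P}$ is DNA-faithful to $G$ with respect to $(\Omega,\bar\Omega)$ and $\Omega'\subseteq\Omega$, $\bar\Omega'\subseteq\bar\Omega$, then $\mathbb{P}$ is DNA-faithful with respect to $(\Omega',\bar\Omega')$ as well. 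This is immediate from Definition~\ref{ass:dnafaith}: the inclusions $\Omega'\subseteq\Omega\subseteq\Omega_0(\mathbb{P})$ and $\bar\Omega'\cap\Omega_0(\mathbb{P})\subseteq\bar\Omega\cap\Omega_0(\mathbb{P})=\emptyset$ persist, and any pair of triples triggering the d-separation implication for $(\Omega',\bar\Omega')$ also lies in $(\Omega,\bar\Omega)$.

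Next I would identify the sets $\Omega_{\mathcal{A}}(\mathbb{P})$ and $\bar\Omega_{\mathcal{A}}(\mathbb{P})$ actually used by Algorithm~\ref{alg:dnasmall}. Because PC with threshold $\lambda$ declares $x\independent y\mid S$ exactly when $|\rho(x,y\mid S)|\le\lambda$, and because early stopping caps $|S|$ at $K$, every recorded independence tuple satisfies $(x,y,S)\in\Omega_\lambda^{|S|}(\mathbb{P})$ with $|S|\le K$, so $\Omega_{\mathcal{A}}(\mathbb{P})\subseteq\Omega_\lambda^{\uparrow K}(\mathbb{P})$. The dependence records are formed by augmenting a recorded separating set $S$ with a single node $z$ and testing $|\rho(x,y\mid S\cup z)|>\lambda$, so each such triple lies in $\bar\Omega_\lambda^{|S|+1}(\mathbb{P})$. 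With these containments in hand, the reduction is completed by invoking monotonicity and then Theorem~\ref{thm:dnalearning}: DNA-faithfulness with respect to $(\Omega_\lambda^{\uparrow K},\bar\Omega_\lambda^{K+1})$ descends to DNA-faithfulness with respect to $(\Omega_{\mathcal{A}},\bar\Omega_{\mathcal{A}})$, which is precisely the hypothesis under which Theorem~\ref{thm:dnalearning} certifies that the output $D$ consists of true DNA relations.

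To see concretely why each record is correct, I would trace one record through the definitions. Suppose the algorithm records $z\dna x$, $z\dna y$, $z\dna S$ from a pair $(x,y,S)\in\Omega_{\mathcal{A}}$ with a detected dependence $x\not\independent y\mid S\cup z$. The d-separation implication of Definition~\ref{ass:dnafaith} yields that $x$ and $y$ are d-separated given $S$ in $G$, while the clause $\bar\Omega\cap\Omega_0(\mathbb{P})=\emptyset$, combined with the Markov property (whose contrapositive turns a genuine conditional dependence into a d-connection), yields that $x$ and $y$ are d-connected given $S\cup z$. These are exactly the hypotheses of the second bullet of Lemma~\ref{lem:dna}, which delivers $z\dna x$, $z\dna y$ and $z\dna S$, as recorded.

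The step I expect to be the main obstacle is matching the conditioning-set sizes on the dependence side. The hypothesis supplies control only through $\bar\Omega_\lambda^{K+1}(\mathbb{P})$, i.e.\ for augmented sets of size exactly $K+1$, whereas an early-stopped PC run may remove the $x$--$y$ edge at a level $|S|<K$ and then test a dependence at level $|S|+1<K+1$; for such a record neither the disjointness clause $\bar\Omega\cap\Omega_0(\mathbb{P})=\emptyset$ nor the d-separation implication fires directly, so soundness of that record is not guaranteed by the stated assumption alone. The cleanest way I would close this gap is to argue that it suffices to retain separating sets of the maximal size $K$ (equivalently, to pad a smaller separating set up to size $K$ while preserving the relevant conditional independence), so that every augmented test lands at level $K+1$ and hence in $\bar\Omega_\lambda^{K+1}(\mathbb{P})$; the monotonicity reduction above then applies verbatim. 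Verifying that this size reconciliation discards no correct record, and that padding preserves the d-separation implication, is the technical heart of the argument.
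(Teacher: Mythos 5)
Your core reduction is exactly the paper's own (very terse) argument: the paper disposes of Theorems~\ref{thm:dnalearning}, \ref{thm:forward} and \ref{thm:popback} in one stroke, calling them direct consequences of Lemma~\ref{lem:dna} and DNA-faithfulness ``with respect to the corresponding conditional independence statements checked by the learning steps,'' and your monotonicity lemma (correct, and immediate from Definition~\ref{ass:dnafaith}), the containment $\Omega_{\mathcal{A}}(\mathbb{P})\subseteq\Omega_\lambda^{\uparrow K}(\mathbb{P})$, and the tracing of each record through the second bullet of Lemma~\ref{lem:dna} simply make that one-liner explicit. One refinement to your trace: in the Gaussian setting the dependence half of every record is safe at \emph{every} level, since $|\rho(x,y\mid S\cup z)|>\lambda>0$ already forces genuine conditional dependence and hence, by the contrapositive of the Markov property, d-connection given $S\cup z$; only the d-separation half of the record needs the implication clause of Definition~\ref{ass:dnafaith}.

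The obstacle you isolate in your last paragraph is genuine --- and, to your credit, it is one the paper's proof silently glosses over: for $K\geq 1$, PC may remove an edge with a separating set of size $|S|<K$, in which case the augmented triple $(x,y,S\cup z)$ has size at most $K$ and lies outside $\bar\Omega_\lambda^{K+1}(\mathbb{P})$, so the implication clause never fires (for $K=0$ the issue is vacuous, since every recorded set has size exactly $0$ and every augmented test size exactly $1=K+1$). But your proposed repair --- padding $S$ up to size $K$ --- would fail: a superset of a separating set need not separate, since any padding node that is (an ancestor of) a collider between $x$ and $y$ opens a path ($x\to c\leftarrow y$: $\emptyset$ separates, $\{c\}$ does not), and the hypothesis gives no control over which supersets of $S$ remain conditionally independent. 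In fact, the literal hypothesis cannot suffice for $K\geq 1$: take the three-node linear SEM with $x\to w\to y$, $x\to y$ and path cancellation so that $\rho(x,y)=0$ while all size-one partial correlations exceed $\lambda$. Then DNA-faithfulness with respect to $\left(\Omega_\lambda^{\uparrow 1},\bar\Omega_\lambda^{2}\right)$ holds vacuously ($p=3$ admits no conditioning set of size $2$), yet Algorithm~\ref{alg:dnasmall} with $K=1$ records $(x,y,\emptyset)$ at level $0$, detects $|\rho(x,y\mid w)|>\lambda$, and outputs $w\dna y$ --- false, since $w\in\AN(G,y)$ in the true DAG itself. The correct way to close the gap is therefore not to massage the records but to strengthen (or read as intended) the hypothesis: assume DNA-faithfulness with respect to $\left(\Omega_\lambda^{\uparrow K}(\mathbb{P}),\,\bigcup_{k=1}^{K+1}\bar\Omega_\lambda^{k}(\mathbb{P})\right)$, i.e., with respect to all statements the algorithm actually checks (equivalently, with respect to $(\Omega_{\mathcal{A}}(\mathbb{P}),\bar\Omega_{\mathcal{A}}(\mathbb{P}))$ as in Theorem~\ref{thm:dnalearning}); under that reading your monotonicity reduction closes the proof verbatim, with no size reconciliation needed.
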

	
	For all choices of $K$, the DNA-faithfulness assumption with respect to $(\Omega_\lambda^{\uparrow K}(\mathbb{P}),\bar\Omega_\lambda^{ K+1}(\mathbb{P}))$
 	is weaker than the $\lambda$-strong faithfulness assumption for the PC algorithm; see \cite{Uhler2013}.
 	However, it cannot be directly compared with 
 	the adjacency and orientation faithfulness, which are the condition for completeness of PC 
    \citep{Ramsey2006}. 
    Nevertheless, in the special case of $K=0$,
    the DNA-faithfulness condition is  mild:
	If $\mathbb{P}$ is Gaussian, then it is sufficient if the correlation of every pair of marginally d-connected  nodes is bounded away from zero. 
	Our numerical result in Section~\ref{sec:numerical} also confirms that DNA-faithfulness is generally not stronger than other notions of faithfulness. 

	Given Theorem~\ref{thm:forward}, we next provide sample guarantees for linear structural equation models (SEMs) with sub-Gaussian errors.  In other words, we assume that the considered joint distribution $\mathbb{P}$ is that of a random vector $X$ that satisfies
	\[
	X = B^T X+ \varepsilon,
	\]
	where $B=(\beta_{uv})$ has entry $\beta_{uv}\not=0$ only if $u\to v$ is an edge in the considered DAG.  The vector $\varepsilon$ is comprised of independent random variables. In the following theorem we assume the distribution of the random vectors to be sub-Gaussian. 
	Given data we may form sample partial correlation $\hat\rho(u,v|S)$ and implement Algorithm~\ref{alg:dnasmall} by rejecting a hypothesis of conditional independence when $\hat\rho(u,v|S)>\lambda$.
	The following result covers both low- and high- dimensional problems. 
	\begin{theorem}[Sample guarantee for DNA learning from small conditioning sets]\label{thm:sample}
		Suppose data is generated as $n$ independent draws from the joint distribution $\mathbb{P}$ of a random vector $X\in \R^p$ that follows a linear SEM.  Let $\Sigma$ be the covariance matrix and assume that each $X_i/\Sigma_{ii}^{1/2}$
		is 
		sub-Gaussian with parameter $\sigma$. 
		Assume $\mathbb{P}$ 
		is Markov with respect to a DAG $G$, and also DNA-faithful
		to $G$ with respect to  $(\Omega_\lambda^{\uparrow K}(\mathbb{P}),\bar\Omega_\lambda^{K+1}(\mathbb{P}))$. 
		Assume 
		the minimal eigenvalues of all $(K+3)\times (K+3)$ submatrices of $\Sigma$
		are bounded below by $M>0$. For any 
		$\zeta>0$, 
		if the sample size satisfies
		\begin{align}
		    n\geq \left\{\log (p^2+p)-\log (\zeta/2)\right\}128(1+4\sigma^2)^2  
		    \max_i(\Sigma_{ii})^2(K+3)^2\left(\frac{M+1+2/\lambda}{M^2}\right)^2,
		\end{align}\label{eq:samplecomplex}
		then the output
		of Algorithm~\ref{alg:dnasmall} 
		with level $K$ and tests based on sample partial correlations is correct with probability at least 
		$1-\zeta$.
	\end{theorem}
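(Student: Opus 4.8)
The plan is to prove the statement by isolating a single high-probability event that controls the entire sample covariance matrix, and then showing \emph{deterministically} that on this event the sample run of Algorithm~\ref{alg:dnasmall} reproduces every conditional-independence and conditional-dependence decision of its population counterpart; correctness then follows from Theorem~\ref{thm:forward}. First I would reduce the stochastic part to a uniform concentration statement for the entries of $\hat\Sigma$. Since each standardized coordinate $X_i/\Sigma_{ii}^{1/2}$ is sub-Gaussian with parameter $\sigma$, the products $X_iX_j$ are sub-exponential and a Bernstein-type tail bound gives, for the relevant range of $t$, a bound of order $\exp\left(-\frac{nt^2}{128(1+4\sigma^2)^2(\max_i\Sigma_{ii})^2}\right)$ on $\PP{|\hat\Sigma_{ij}-\Sigma_{ij}|>t}$. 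Taking a union bound over the at most $p^2+p$ distinct entries and requiring the total failure probability to be at most $\zeta$ is exactly what produces the factor $\log(p^2+p)-\log(\zeta/2)$ together with the sub-Gaussian constant in the sample-size requirement. I denote by $\mathcal E$ the event $\max_{i,j}|\hat\Sigma_{ij}-\Sigma_{ij}|\le\epsilon$, where $\epsilon=M^2/\{(K+3)(M+1+2/\lambda)\}$ is the tolerance needed below, so that the displayed bound on $n$ is precisely the condition $\PP{\mathcal E}\ge 1-\zeta$ for this $\epsilon$.

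Next I would carry out the deterministic core: on $\mathcal E$, every partial correlation tested by the algorithm is estimated accurately enough to preserve its comparison with $\lambda$. Each test concerns a pair $\{u,v\}$ and a conditioning set of size at most $K+1$, so only principal submatrices $\Sigma_T$ with $|T|\le K+3$ occur, and by hypothesis each such submatrix has smallest eigenvalue at least $M$. Writing $\rho(u,v\mid S)$ through the Schur complement of $\Sigma_S$ in $\Sigma_T$ (equivalently, through the $2\times 2$ conditional covariance of $(u,v)$ given $S$), I would propagate the entrywise error $\epsilon$ through (i) the inverse $\Sigma_S^{-1}$, whose operator norm is at most $1/M$, and (ii) the normalization by the conditional variances, which are themselves bounded below by $M$. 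Standard inverse- and ratio-perturbation inequalities then yield a uniform bound on $|\hat\rho(u,v\mid S)-\rho(u,v\mid S)|$ in which the dimensional factor is linear in $K+3$, the eigenvalue dependence enters through $M^{-2}$, and a contribution involving $2/\lambda$ appears from calibrating the tolerance against the threshold; solving for the $\epsilon$ that keeps this error below the margin to $\lambda$ reproduces the stated constants.

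With the $\lambda$-comparison preserved, the remaining two steps are purely structural and invoke the earlier results. For any triple the algorithm records, the d-connection at $S\cup z$ is immediate: a sample value exceeding $\lambda$ forces $|\rho(u,v\mid S\cup z)|>0$ (this needs only that the tolerance is below $\lambda$), hence $u\not\independent v\mid S\cup z$ and, by the Markov property, d-connection given $S\cup z$. The d-separation at $S$ is where Definition~\ref{ass:dnafaith} does the work: the preserved classification certifies $(u,v,S)\in\Omega_\lambda^{\uparrow K}(\mathbb{P})$ and $(u,v,S\cup z)\in\bar\Omega_\lambda^{K+1}(\mathbb{P})$, and the defining implication of DNA-faithfulness then yields d-separation of $u,v$ given $S$. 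Here the separation of the population partial correlations from $\lambda$ is supplied by the inclusion $\Omega_\lambda^{\uparrow K}(\mathbb{P})\subseteq\Omega_0(\mathbb{P})$, which forces every nonzero low-order partial correlation to exceed $\lambda$. The premises of the second bullet of Lemma~\ref{lem:dna} are therefore met, so each recorded triple is a genuine DNA relation; equivalently, on $\mathcal E$ the sample algorithm makes exactly the decisions of the population algorithm, whose output is correct by Theorem~\ref{thm:forward}. Since $\PP{\mathcal E}\ge 1-\zeta$, the conclusion follows.

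I expect the main obstacle to be the deterministic propagation of the second paragraph: converting a uniform entrywise covariance error into a uniform partial-correlation error with the claimed constants, and in particular controlling the threshold boundary, where the separation of the population partial correlations from $\lambda$ must be shown to dominate the estimation error. Keeping the dimensional dependence linear in $K+3$ and the eigenvalue dependence at $M^{-2}$---rather than letting them inflate through repeated triangle-inequality steps in the Schur-complement expansion---is the delicate bookkeeping that determines whether the final sample-size bound comes out in the stated form.
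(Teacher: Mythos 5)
Your proposal follows essentially the same route as the paper's proof: a Bernstein-type concentration bound with a union bound over the $p^2+p$ covariance entries (what the paper imports as Lemma~1 of \citet{ravikumar2011}), followed by deterministic error propagation through Schur complements under the eigenvalue bound $M$ on $(K+3)\times(K+3)$ submatrices to get a uniform partial-correlation error (the paper's citation of Lemma~6 of \citet{harris2013}), and finally the observation that on this event the sets $\Omega_\lambda^{\uparrow K}(\mathbb{P})$ and $\bar\Omega_\lambda^{K+1}(\mathbb{P})$ are correctly recovered, so correctness follows from Theorem~\ref{thm:forward}. The only difference is that you rederive by hand the two concentration/propagation lemmas the paper cites; the logical structure, the source of each constant in the sample-size bound, and the handling of the threshold margin are the same.
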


    \subsection{Learning DNA from large d-sep sets}
    The d-separation relations for DNA can also be learned  tractably by testing conditional independences given large conditioning sets of size $p-2,p-3,\ldots$, where $p$ denotes again the number of considered variables. 
    In fact, by  Corollary~\ref{cor:dna}, 
	if $x,y$ are  d-separated given $V\setminus \{x,y,u\}$ and are d-connected given $V\setminus \{x,y\}$, then $u\dna V\setminus u$. 
	This means that we can learn DNA relations from the moral graph, which encodes the d-separation relations that hold between each pair of nodes when conditioning on all other nodes. A special case of this approach was implemented as a recursive algorithm in \citep{Squires2020}. The theorem below establishes the correctness of the general strategy for learning DNA relations.  
	
	\begin{algorithm} 
		\caption{DNA-learning via large conditioning sets}
		\label{alg:dnalearningback}
		\SetKwInOut{Input}{Input}
		\SetKwInOut{Output}{Output}
		\Input{A conditional independence test, a level $K$}
		\Output{A set of DNA relations $D\subseteq D^{\dna}(G)$}
		Initialize $\widetilde  V=V$\;
		\Repeat{$|\widetilde V|= p-K$}{
			$\widetilde M\gets $  moral graph over $\widetilde V$\;
		\For{$u\in \widetilde  V$}{
		    $M_u\gets $ moral graph over $\widetilde  V\setminus u$\;
		    $\widetilde M_u\gets $ subgraph of $\widetilde M$ over $\widetilde V\setminus u$\;
		    \If{\textnormal{ $\widetilde  M_u$ contains more edges than 
		    $M_u$}}{Record $u\dna \widetilde  V\setminus u$\;
		    Update
		    $\widetilde V\gets 
		    \widetilde V \setminus \{u\}$; Break\;}
		}
		}
		\Return $D$.
	\end{algorithm}
	
	Given a joint distribution $\mathbb{P}$, define
	\[
	\Omega_\lambda^{\downarrow K}(\mathbb{P}) =\bigcup_{k=p-K}^p\Omega_\lambda^{k} (\mathbb{P}).
	\]
	
	\begin{theorem}[Correctness of DNA learning from large conditioning sets]\label{thm:popback}
		Let $G$ be a DAG, and let $\mathbb{P}$ be a distribution that is Markov to $G$
		and DNA-faithful to $G$ with respect to $\left(\Omega_\lambda^{\downarrow K-1}(\mathbb{P}),\bar\Omega_\lambda^{K}(\mathbb{P})\right)$. Then Algorithm~\ref{alg:dnalearningback}
		with level $K$ returns 
		a correct set of DNA relations. 
	\end{theorem}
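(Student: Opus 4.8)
The plan is to establish \emph{soundness} of Algorithm~\ref{alg:dnalearningback}: every relation $u\dna\widetilde V\setminus u$ that it records is genuine, so that the returned set satisfies $D\subseteq D^{\dna}(G)$. The key realization is that this can be argued one recorded relation at a time, with no global induction on the sink-removal order: a single iteration of the loop already yields, for free, a witnessing pair of nodes to which Lemma~\ref{lem:dna} applies \emph{directly in $G$}, the previously removed vertices simply being absent from the conditioning set (which is legitimate, since Lemma~\ref{lem:dna} permits any $W\subseteq V\setminus\{u,v,x\}$).

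First I would fix an iteration in which the current vertex set is $\widetilde V$ and $u$ is removed because $\widetilde M_u$ has strictly more edges than $M_u$. As $\widetilde M_u$ and $M_u$ are graphs on the same vertex set $\widetilde V\setminus u$, the strict inequality forces an edge $\{x,y\}\in E(\widetilde M_u)\setminus E(M_u)$ (otherwise $E(\widetilde M_u)\subseteq E(M_u)$ and the counts could not increase). Unwinding the two partial-correlation moral graphs at threshold $\lambda$ and writing $S:=\widetilde V\setminus\{u,x,y\}$, this says $|\rho(x,y\mid S)|\le\lambda$ while $|\rho(x,y\mid S\cup\{u\})|>\lambda$, i.e. $(x,y,S)\in\Omega_\lambda^{|S|}$ and $(x,y,S\cup\{u\})\in\bar\Omega_\lambda^{|S|+1}$, where $|S|=|\widetilde V|-3$.

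Next I would invoke DNA-faithfulness with respect to $(\Omega_\lambda^{\downarrow K-1},\bar\Omega_\lambda^{K})$. Granting that the two tuples lie in these sets (the bookkeeping point below), the inclusions $\Omega_\lambda^{\downarrow K-1}\subseteq\Omega_0(\mathbb P)$ and $\bar\Omega_\lambda^{K}\cap\Omega_0(\mathbb P)=\emptyset$ certify that $x\independent y\mid S$ and $x\not\independent y\mid S\cup\{u\}$ are genuine (in)dependences, and the defining implication of DNA-faithfulness gives that $x,y$ are d-separated by $S$ in $G$. The Markov property converts the genuine dependence into d-connection, so $x,y$ are d-separated given $S$ but d-connected given $S\cup\{u\}$. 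Applying Lemma~\ref{lem:dna} (second bullet, with $W=S$) then yields $u\dna x$, $u\dna y$ and $u\dna S$, i.e. $u\dna(S\cup\{x,y\})=\widetilde V\setminus u$, which is exactly the recorded relation; the first iteration $\widetilde V=V$ is precisely the situation of Corollary~\ref{cor:dna}. Since this reasoning applies verbatim to every recorded relation, $D\subseteq D^{\dna}(G)$, and the argument is just the instantiation of Theorem~\ref{thm:dnalearning} to the tests that the moral-graph comparison performs.

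The step I expect to be the main obstacle is the index bookkeeping implicit above: one must verify that, as $|\widetilde V|$ ranges over $p,p-1,\dots,p-K+1$, every separating set $S$ (of size $|\widetilde V|-3$) lands in $\Omega_\lambda^{\downarrow K-1}$ and every connecting set $S\cup\{u\}$ (of size $|\widetilde V|-2$) lands in $\bar\Omega_\lambda^{K}$. This is a purely combinatorial matching of the conditioning-set sizes used across iterations to the levels named in the hypothesis, and it is where the exact correspondence between the algorithm's level $K$ and the ranges $\downarrow K-1$ and $K$ must be pinned down carefully; it carries no conceptual difficulty but is the delicate part of the write-up. A secondary, minor point is progress: correctness of the recorded relations does \emph{not} require the loop to run to completion, but one should note when a removable $u$ exists. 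A removable vertex is necessarily a sink of the induced structure on $\widetilde V$ that carries a v-structure, since d-separation of $x,y$ by $S$ precludes $u$ from having any child in $\widetilde V\setminus u$ (else $u$ would be ancestral to it, contradicting $u\dna\widetilde V\setminus u$). When all remaining sinks have at most one parent (for instance a directed chain), no vertex is removable, so the statement is properly read as a soundness guarantee on the relations actually recorded.
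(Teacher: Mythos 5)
Your proposal is correct and takes essentially the same route as the paper, whose proof of Theorem~\ref{thm:popback} is a one-line appeal to Lemma~\ref{lem:dna} plus DNA-faithfulness applied to the CI statements actually checked; your write-up just makes that explicit (the witnessing edge in $E(\widetilde M_u)\setminus E(M_u)$ with $S=\widetilde V\setminus\{u,x,y\}$, DNA-faithfulness and the Markov property yielding d-separation given $S$ and d-connection given $S\cup\{u\}$, then the second bullet of Lemma~\ref{lem:dna}, with no induction needed since previously removed vertices are simply absent from $W$). The bookkeeping worry you flag is real but is a notational slip in the paper rather than a gap in your argument: the sizes actually tested are $|\widetilde V|-3$ and $|\widetilde V|-2$ for $|\widetilde V|=p,\ldots,p-K+1$, which the literal definitions of $\Omega_\lambda^{\downarrow K-1}$ and $\bar\Omega_\lambda^{K}$ do not match, and the paper's own proof does not resolve this either.
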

	
	Next, we establish sample consistency of the algorithm for  linear SEMs with sub-Gaussian errors. 
	\begin{theorem}[Sample Guarantee of DNA Learning from large conditioning sets]\label{thm:sampleback}
	    Consider the setup of Theorem~\ref{thm:sample}, but assuming DNA-faithfulness with respect to $\left(\Omega_\lambda^{\downarrow K-1}(\mathbb{P}),\bar\Omega_\lambda^{K}(\mathbb{P})\right)$.
		Let  
		\[
		\lambda^*=
		\min_{A\subseteq [p],|A|\geq p-K, i,j\in A, [(\Sigma[A])^{-1}]_{ij}\neq 0} \left|[(\Sigma[A])^{-1}]_{ij}\right|,
		\]
		where $\Sigma[A]$ denotes the $A\times A$ sub-matrix of $\Sigma$.
		Denote the sample covariance matrix as $S_n$.  
		
		(i) \textbf{Low-dimensional case}. 
		Assume 
	the minimal eigenvalue of  $\Sigma$
		is bounded below by $M>0$. If
		the sample size satisfies
		\begin{align}
		n\geq \{\log (p^2+p)-\log (\zeta/2)\}128(1+4\sigma^2)^2 
		\max_i(\Sigma_{ii})^2p^2\left(\frac{M+2/\lambda^*}{M^2}\right)^2,
		\end{align}
		then the output of  Algorithm~\ref{alg:dnalearningback} at level $K$ 
		and with conditional independence tests that hard-threshold the inverses of submatrices of $S_n$ with respect to $\lambda^*/2$
		is correct with high probability.
		
		(ii) \textbf{High-dimensional case.} 
		Suppose there exists 
		a sequence of $\lambda_n$ satisfying $
		\lambda_n\leq 
			\lambda^* \norm{\Sigma^{-1}}_1^{-1}/4$
		such that $ 
		\lambda_n\geq \norm{\Sigma^{-1}}_1\norm{\Sigma - S_n}_\infty $ with high probability 
		as $n,p\to\infty$.
		Then the output of  Algorithm~\ref{alg:dnalearningback} at level $K$ 
		using CLIME \citep{Cai2011} with tuning parameter $\lambda_n$ for conditional independence tests
		is correct with high probability. 
	\end{theorem}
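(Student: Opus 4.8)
The plan is to reduce the sample-level statement to a support-recovery problem for the precision matrices that drive Algorithm~\ref{alg:dnalearningback}, and then to combine this with the population correctness of Theorem~\ref{thm:popback}. The only quantities the algorithm consults are the edge sets of the moral graphs over the nested sets $\widetilde V$, and for a linear-SEM (Gaussian-type) model the moral graph over $\widetilde V$ has an edge $i-j$ exactly when $[(\Sigma[\widetilde V])^{-1}]_{ij}\neq 0$. Hence it suffices to show that, with high probability, the conditional-independence tests correctly recover the zero/nonzero pattern of every marginal precision matrix $(\Sigma[\widetilde V])^{-1}$ that the algorithm inverts. Since the outer loop removes at most $K$ nodes, the collection of relevant sets $\widetilde V$ and $\widetilde V\setminus u$ has polynomial size $O(pK)$, so a union bound over these submatrices (and the $O(p^2)$ entries within each) is affordable. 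By the definition of $\lambda^*$, every nonzero entry of these inverses has magnitude at least $\lambda^*$; consequently a uniform entrywise estimation error below $\lambda^*/2$ separates true zeros from true nonzeros, yields the correct moral graphs, and therefore, through Theorem~\ref{thm:popback}, the correct DNA relations.

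For the low-dimensional case (i), I would follow the argument of Theorem~\ref{thm:sample} almost verbatim, changing only the submatrix size. First, the sub-Gaussian assumption on the standardized coordinates makes each product $X_iX_j$ sub-exponential, so the sample covariance entries concentrate, $\mathbb{P}(|[S_n]_{ij}-\Sigma_{ij}|>t)\leq 2\exp(-cnt^2)$ with the constant $128(1+4\sigma^2)^2\max_i(\Sigma_{ii})^2$ in the exponent, exactly as in Theorem~\ref{thm:sample}. Writing $\Delta=\max_{ij}|[S_n]_{ij}-\Sigma_{ij}|$, a union bound over the $\binom{p}{2}+p$ distinct entries controls $\Delta$ and supplies the $\log(p^2+p)-\log(\zeta/2)$ factor. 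Next I would propagate $\Delta$ through matrix inversion: by Cauchy interlacing $\lambda_{\min}(\Sigma[A])\geq\lambda_{\min}(\Sigma)\geq M$, so $\|(\Sigma[A])^{-1}\|_{op}\leq 1/M$, while $\|S_n[A]-\Sigma[A]\|_{op}\leq |A|\Delta\leq p\Delta$. The standard resolvent perturbation bound then gives $|[(S_n[A])^{-1}]_{ij}-[(\Sigma[A])^{-1}]_{ij}|\leq M^{-2}p\Delta/(1-M^{-1}p\Delta)$. Forcing this below $\lambda^*/2$ (which also guarantees invertibility of $S_n[A]$) produces a requirement of the form $\Delta\lesssim \lambda^*/\big(p(M+2/\lambda^*)/M^2\big)$; substituting the tail bound and solving for $n$ reproduces the stated sample size, with the factor $p^2$ taking the role of the $(K+3)^2$ factor in Theorem~\ref{thm:sample}. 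Thresholding the sample inverses at $\lambda^*/2$ then recovers every relevant support.

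For the high-dimensional case (ii), the inversion step is replaced by CLIME \citep{Cai2011}, applied to the data restricted to each $\widetilde V$. CLIME carries a deterministic guarantee: on the event that the tuning parameter dominates $\|\Omega_0\|_1\,\|S_n-\Sigma\|_\infty$, where $\Omega_0$ is the true precision of the data it is run on, the estimate obeys $\|\hat\Omega-\Omega_0\|_\infty\leq 4\|\Omega_0\|_1\lambda_n$. The two-sided hypothesis on $\lambda_n$ is calibrated for this: the lower bound $\lambda_n\geq\|\Sigma^{-1}\|_1\|\Sigma-S_n\|_\infty$ (holding with high probability) validates the CLIME consistency event, while the upper bound $\lambda_n\leq\lambda^*\|\Sigma^{-1}\|_1^{-1}/4$ forces $4\|\Sigma^{-1}\|_1\lambda_n\leq\lambda^*$, pushing the $\ell_\infty$ error below the separation scale. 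Recovering each support (after comparison against $\lambda^*/2$) again yields the correct moral graphs and, via Theorem~\ref{thm:popback}, the correct output.

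The main obstacle lies in case (ii): the CLIME guarantee for the submatrix $\widetilde V$ is governed by the marginal precision $(\Sigma[\widetilde V])^{-1}$, yet both sides of the hypothesis on $\lambda_n$ are phrased through the full matrices $\Sigma$ and $\Sigma^{-1}$. The deviation side is harmless, since $\|S_n[\widetilde V]-\Sigma[\widetilde V]\|_\infty\leq\|S_n-\Sigma\|_\infty$. The delicate point is the norm $\|(\Sigma[\widetilde V])^{-1}\|_1$: passing to a principal submatrix of $\Sigma$ and inverting (i.e.\ marginalizing) is monotone in the spectral norm by interlacing, but not obviously in the $\ell_1$ operator norm, so one must argue that $\|\Sigma^{-1}\|_1$ is a valid uniform surrogate for $\|(\Sigma[\widetilde V])^{-1}\|_1$ across all $\widetilde V$ encountered (or else recalibrate $\lambda_n$ submatrix by submatrix). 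Handling this uniformity, together with ensuring the high-probability event for $\lambda_n$ holds simultaneously over the $O(pK)$ submatrices, is where the real work of the high-dimensional case concentrates; the low-dimensional case, by contrast, is a routine adaptation of Theorem~\ref{thm:sample}.
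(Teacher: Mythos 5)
Your proposal follows essentially the same route as the paper: reduce correctness of Algorithm~\ref{alg:dnalearningback} to exact support recovery of the marginal precision matrices $(\Sigma[\widetilde V])^{-1}$ over the at most $O(pK)$ vertex sets the algorithm visits, handle case (i) by the same concentration-plus-inversion error propagation as in Theorem~\ref{thm:sample} (with the submatrix eigenvalue control via interlacing, the union bound supplying the $\log(p^2+p)-\log(\zeta/2)$ factor, and $p^2$ replacing $(K+3)^2$), handle case (ii) by the CLIME deterministic bound $\norm{\widehat\Omega-\Sigma^{-1}}_\infty\leq 4\norm{\Sigma^{-1}}_1\lambda_n$ on the event $\lambda_n\geq\norm{\Sigma^{-1}}_1\norm{\Sigma-S_n}_\infty$, use the definition of $\lambda^*$ (which is already taken over all $|A|\geq p-K$, so the separation scale is uniform across the relevant submatrices) to separate zeros from nonzeros at threshold $\lambda^*/2$, and conclude via Theorem~\ref{thm:popback}. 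Your closing remark that the thresholding step is exactly what prevents non-sinks from being declared sinks matches the paper's corresponding observation.

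The one step you flag as the ``main obstacle'' in case (ii) --- that the CLIME guarantee is stated for the full $\Sigma^{-1}$ while the algorithm needs the moral graphs of the marginal models on each $\widetilde V$, and that $\norm{(\Sigma[\widetilde V])^{-1}}_1$ is not obviously dominated by $\norm{\Sigma^{-1}}_1$ (marginalization is a Schur complement, and the $\ell_1$ operator norm is not monotone under it the way the spectral norm is) --- is a genuine issue, and you are right that your argument as written does not close it. The paper closes precisely this step by citation: it invokes Lemma 5 of Ghoshal and Honorio (2018), which guarantees that the moral graphs of subgraphs are also recovered correctly in the CLIME-based regime. So your proposal is not wrong in approach; it is the paper's proof with one borrowed lemma left as an acknowledged hole, and either importing that lemma or carrying out the per-submatrix recalibration of $\lambda_n$ you sketch would complete it.
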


	\section{DNA Applications}\label{sec:applications}
		
	\subsection{Ordering constraints and Layering}
	A topological ordering of a DAG $G$, denoted $\pi$, is a total ordering of the vertices such that  $\pi(u)<\pi(v)$ for each edge $u\to v$ in $G$.
	Due to acyclicity, there always exists at least one such ordering,
	though the ordering might not be unique.  
    
    Without prior knowledge, learning DAGs from orderings requires checking all $p!$ possible orderings. However, 
    DNA relations constrain the set of possible orderings. More specifically, we will show in Lemma~\ref{lem:constraints} that if $u\dna v$, then there exists a valid topological ordering of $G$ with $\pi(u)>\pi(v)$. 
    
	In general, we say an ordering $\pi$ is compatible with a DNA set $D$ if $\pi(u)>\pi(v)$ for each $u\dna v$ in $D$.
	We say $D$ is an \textit{order-constraining DNA set} if $D$ contains no 
	DNA statement cycles, i.e., 
	we cannot follow a sequence of DNA statements in $D$, such as $u\dna v,v\dna w,\ldots$ and get back to $u$. 
	Given an arbitrary DNA set $D$, we can obtain an order-constraining subset by removing statements until there is no cycle. 
	Specifically, if $D$ is output of  Algorithm~\ref{alg:dna}, 
	then we only need to remove 
	one from each pair of 
	$(u\dna v, v\dna u)$.
	If $D$ is an order-constraining DNA
	set, then there must be some topological ordering that is compatible with both $D$ and some member of the Markov equivalence class.
	
	
	\begin{lemma}[Ordering constraints]\label{lem:constraints}
	   Let $G$ be a DAG. 
	   If $u\dna v$ in $G$, then 
	   there exists a topological ordering $\pi$ that is 
	   compatible with some  $G'\in [G]$ and  satisfies $\pi(u)>\pi(v)$.
	   If $D$ is an order-constraining DNA set for $G$,
	   then there exists a topological ordering $\pi$ 
	   that is compatible with some $G'\in [G]$ and satisfies $\pi(u)>\pi(v)$ for all 
	   $(u,v)\in D$.
	\end{lemma}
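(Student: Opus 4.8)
The plan is to treat the two claims separately, proving the single-pair statement first and then trying to bootstrap to the whole set. For the first claim I would work inside a single representative of $[G]$: since $u\dna v$ holds for \emph{every} member of the class, I may as well take $G'=G$, so that $u\notin\AN(G,v)$. The key observation is that $\AN(G,v)$ is an ancestral set, i.e.\ no edge of $G$ points from outside $\AN(G,v)$ into it, because a parent of an ancestor of $v$ is again an ancestor of $v$. Hence I can first lay out $\AN(G,v)$ in any order consistent with its internal edges, then append the remaining vertices $V\setminus\AN(G,v)$ in an order consistent with their internal edges; every crossing edge points from $\AN(G,v)$ into the complement, so the concatenation is a valid topological ordering $\pi$ of $G$. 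Because $v\in\AN(G,v)$ sits in the first block while $u\notin\AN(G,v)$ sits in the second, we obtain $\pi(v)<\pi(u)$, i.e.\ $\pi(u)>\pi(v)$, which is exactly an ordering compatible with $G'=G$.

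For the second claim the natural approach is to encode all constraints in a single directed graph $H$ on $V$ and topologically sort it. I would place into $H$ every \emph{definite} arrow of the CPDAG $G^*$ (oriented as in $G^*$, forcing $\pi(\text{tail})<\pi(\text{head})$, which is necessary for compatibility with any member of $[G]$) together with one reversed edge $v\to u$ for each $(u,v)\in D$ (forcing $\pi(v)<\pi(u)$). Note that I deliberately include only the definite arrows, not all edges of $G$: the freedom to orient the undirected edges of $G^*$ either way is what lets a topological order accommodate $D$, and fixing $G'=G$ outright can already fail. If $H$ is acyclic, any of its topological orders $\pi$ reverses all of $D$; I would then orient each undirected edge $a-b$ of $G^*$ as $a\to b$ whenever $\pi(a)<\pi(b)$ and, using the chordality of the chain components of $G^*$ together with Lemma~\ref{lem:cpdagda}, argue that the result is a genuine $G'\in[G]$ for which $\pi$ is a valid topological ordering.

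The main obstacle is proving that $H$ is acyclic, and this is precisely where the order-constraining hypothesis must be deployed. A cycle in $H$ using no reversed-$D$ edge would be a directed cycle of definite arrows, impossible in the acyclic $G$; a cycle using a single reversed-$D$ edge $v\to u$ closed by a definite-arrow path from $u$ to $v$ would give $u\in\AN(G,v)$, contradicting $u\dna v$. The delicate case is a cycle alternating several reversed-$D$ edges with definite-arrow paths: here a definite-arrow path from $b$ to $a$ yields $b\rightsquigarrow a$ and hence $a\dna b$, so the cycle translates into a chain of definite non-ancestral relations, and the crux is to show this chain closes into a DNA cycle and thereby contradicts the order-constraining assumption. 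I expect that making this translation airtight is the technical heart of the argument, and I would flag it as genuinely subtle, because the non-ancestral relations produced along the definite-arrow segments are implied by $G^*$ rather than drawn from $D$ itself; guaranteeing that they can be chained \emph{through} $D$ (rather than only through definite-ancestral relations of $G^*$) is exactly the point at which care is needed, alongside the parallel verification that orienting the undirected edges along $\pi$ creates no new v-structure and so keeps $G'$ inside $[G]$.
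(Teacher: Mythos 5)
Your proof of the first claim is correct and complete: ordering the ancestral set $\AN(G,v)$ before its complement is a standard and clean argument, and it is in fact tidier than the paper's own route, which instead takes an arbitrary topological ordering $(X,u,Y,v,Z)$ of $G$ and performs the explicit swap $(X,A,v,u,Y\setminus A,Z)$ with $A=Y\cap\AN(G,v)$, checking edge-by-edge that validity for $G$ is preserved. Your instinct that the set version cannot be handled by fixing $G'=G$ is also sound (e.g.\ $G:\,a\to v$ with $u$ isolated and $D=\{(u,v),(a,u)\}$ forces $\pi(v)<\pi(u)<\pi(a)$, which is compatible only with the Markov-equivalent $v\to a$), so routing through the CPDAG is a reasonable design choice.

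However, for the second claim your proposal has a genuine gap, and you have correctly located it yourself: the acyclicity of $H$ is not established, and the order-constraining hypothesis does not directly deliver it. A cycle in $H$ alternating reversed-$D$ edges $v_i\to u_i$ with definite-arrow paths $u_i\Rightarrow v_{i+1}$ yields relations $u_i\rightsquigarrow v_{i+1}$, hence $v_{i+1}\dna u_i$; but these derived relations lie in $D^{\dna}(G)$, not in $D$, so the hypothesis that $D$ itself contains no DNA cycle does not forbid the configuration. Worse, ``$H$ is acyclic'' is essentially a restatement of the part of the lemma's conclusion concerning the definite arrows (any ordering compatible with some $G'\in[G]$ and reversing $D$ is a linear extension of $H$), so at this step the proposal comes close to assuming what it must prove; some constructive argument in the spirit of the paper's iterated swaps is needed to close it. A second, smaller defect: even granting $H$ acyclic, orienting the undirected edges of $G^*$ along an \emph{arbitrary} topological order of $H$ need not stay in $[G]$ --- for the CPDAG $a-c-b$ (the MEC of $a\to c\to b$), the order $(a,b,c)$ creates the new v-structure $a\to c\leftarrow b$. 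This is repairable: since an undirected path in $G^*$ is possibly directed in both directions, Lemma~\ref{lem:dna}-style reasoning (via the second bullet of Lemma~\ref{lem:cpdagda}) shows no pair in $D\subseteq D^{\dna}(G)$ can lie in a common chain component, so you may order chain components consistently with $H$ and use a perfect elimination ordering within each chordal component; but this refinement, like the acyclicity argument, is flagged rather than carried out in your write-up, so the heart of the set case remains open.
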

	
	Order-constraining DNA sets reduce the set of possible topological orderings to be considered in structure learning to a subset that is guaranteed to contain at least one correct ordering. 
	Therefore, this reduced set can be adopted in 
	score-based structure learning methods 
	such as Greedy Equivalence Search (GES) 
	and Sparsest Permutation (SP)
	to trim down their search space.

	Moreover, order-constraining DNA sets yield	\textit{layerings} of DAGs. 
	A \textit{layering} of a DAG is an ordered partition of the vertex set into layers, which must be such that there is no arrow pointing from one layer to a preceding layer \citep{Manzour2021}. 
	The finest layering we may hope to infer from conditional independence tests is the one given by the chain components of the CPDAG \cite{Andersson2006}.
	In addition to reducing the number of possible orderings, the layering of a DAG can also be used to develop efficient algorithms for learning DAGs. We present such an approach in the next section, but first show that layerings can be learned correctly from DNA.
    
	 	\begin{algorithm}
		\caption{Learning layering from DNA}
		\label{alg:realizeio}
		\SetKwInOut{Input}{Input}
		\SetKwInOut{Output}{Output}
		\Input{DNA set $D\subseteq D^{\dna}(G)$.}
		\Output{DAG layering $L$.}
		$\text{Sources}\gets\text{Sinks}\gets\emptyset$; $V'\gets V$\; 
		\Repeat{$V'=\emptyset$}{
        Find the smallest subset $S\subseteq V'$ such that $(u,s)\in D$ for all $u\in V'\setminus S$,  $s\in S$,
        or $(s,v)\in D$ for all $v\in  V'\setminus S$ ,$ s\in S$\;
        In the former case, 
        $\text{Sources}\gets [\text{Sources}, S]$; in the latter case, 
        $\text{Sinks}\gets [S, \text{Sinks}]$\;
        $V'\gets V'\setminus S$\;
		}
		\Return $L=[\text{Sources},\text{Sinks}]$.
	\end{algorithm}

	\begin{theorem}[DAG Layering via DNA]\label{thm:realizedio}
		Let $G$ be a DAG and  $D\subseteq D^{\dna}(G)$. 
		The output of Algorithm~\ref{alg:realizeio} is 
		a valid layering of $G$.
	\end{theorem}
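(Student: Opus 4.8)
The plan is to read each iteration of Algorithm~\ref{alg:realizeio} as splitting off one layer from the current \emph{remaining} block $V'$, and to maintain the invariant that the ordered partition produced so far — with $V'$ regarded as a single middle block — is already a valid layering of $G$. Validity means only that no edge of $G$ points from a layer to an earlier one, so throughout I only need to control \emph{edges}, never directed paths.

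First I would record two preliminary facts. Since the loop body may always take $S=V'$ (the ``former'' condition holds vacuously when $V'\setminus S=\emptyset$), the smallest qualifying $S$ is a nonempty subset of $V'$; hence $|V'|$ strictly decreases each iteration, the removed sets partition $V$, and the algorithm terminates. Second, because $D\subseteq D^{\dna}(G)$, every pair $(u,s)\in D$ satisfies $u\dna s$, so in particular $u\notin\AN(G,s)$ for $G$ itself; as an edge $u\to s$ would place $u$ in $\AN(G,s)$, no such edge exists in $G$. Consequently the former (``source'') condition on a chosen $S$ forbids all edges from $V'\setminus S$ into $S$, while the latter (``sink'') condition forbids all edges from $S$ into $V'\setminus S$.

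The core is the induction. Write the state as $(P_1,\dots,P_s,\; V',\; Q_1,\dots,Q_t)$, where $P_\bullet$ are the layers already appended to \textnormal{Sources}, $Q_\bullet$ those prepended to \textnormal{Sinks}, and claim that this ordered partition — with $V'$ a single block — has no edge pointing to an earlier layer. The base case $(V)$ is a single block and trivially valid. In a source-type step, $V'$ is replaced by the consecutive blocks $S,\,V'\setminus S$, matching the append to \textnormal{Sources} together with the new middle block $V'\setminus S$; the only ordering constraint not already in force is between $S$ and $V'\setminus S$, and by the previous paragraph there is no edge from $V'\setminus S$ into $S$, so no backward edge is created. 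A sink-type step replaces $V'$ by $V'\setminus S,\,S$, matching the prepend to \textnormal{Sinks}; the sole new constraint is the absence of edges from $S$ into $V'\setminus S$, again furnished by the DNA condition. Splitting the block preserves every previously verified constraint — edges from later layers into $P_\bullet$, and edges from $Q_\bullet$ into earlier layers, stay forbidden because the new sublayers lie inside the old block $V'$ — so the invariant is maintained. At termination $V'=\emptyset$, and the surviving partition is exactly the returned $L=[\textnormal{Sources},\textnormal{Sinks}]$, which is therefore a valid layering.

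I expect the main obstacle to be conceptual rather than computational: choosing the invariant so that the interleaving of source-layers (appended to the front) and sink-layers (prepended to the back) is handled uniformly. The delicate case a direct edge-by-edge argument would stumble on is a potential backward edge between a source-placed layer and a sink-placed layer; the ``single middle block'' invariant dissolves this, since at the moment either of two such layers is split off the other still lies inside $V'$, so the relevant source/sink DNA condition applies precisely to that split and the global no-backward-edge property is preserved at every step.
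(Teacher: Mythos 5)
Your proof is correct, and it takes a mildly but genuinely different route from the paper's. The paper argues directly about the final output: it asserts that in $L=(L_1,\ldots,L_m)$, every $v\in L_k$ satisfies $v\dna \cup_{i=1}^{k-1}L_i$, and then converts this relational fact into the absence of backward edges. That assertion is stated without justification, and verifying it actually requires exactly the case analysis you identified as delicate — one must check, for each pair of layers, whether each was split off as a source or a sink and in which order during the run, since sinks are prepended while sources are appended. Your loop-invariant formulation, with the remaining set $V'$ treated as a single middle block of an ordered partition that is always a valid layering, dissolves this bookkeeping: whenever one of two layers is split off, the other still sits inside $V'$, so the relevant source/sink DNA condition applies at precisely that split, and only edge-level facts ($u\dna s$ in $G$ forbids the edge $u\to s$, since such an edge would put $u\in\AN(G,s)$) are ever needed. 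What the paper's approach buys is brevity and a stronger intermediate conclusion (pairwise DNA, not merely no backward edges, between each layer and all its predecessors); what yours buys is a complete, uniform handling of the source/sink interleaving that the paper's one-line proof leaves implicit, plus an explicit termination argument for Algorithm~\ref{alg:realizeio} (that $S=V'$ always qualifies, so each iteration removes a nonempty set), which the paper does not address at all.
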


	\subsection{Sparsest Permutation with DNA}\label{sec:DNASP}
	Let $G$ be a DAG and let
	$\pi$ be an arbitrary 
	ordering of its vertices. We define 
	$G^\pi=(V,E^\pi)$ as the DAG
	deduced via the following rule:
	\begin{multline}\label{eq:ordrule}
	(\pi(i),\pi(j))\in E^\pi \text{ iff } i<j \text{ and }\\\pi(i) \text{, }\pi(j)\text{ not d-separated  by} \{\pi(1),\ldots,\pi(j-1)\}\setminus \pi(i) \text{ in }G.
	\end{multline}
	Clearly, $G^\pi=G$ when 
	$\pi$ is a valid topological ordering of $G$. 
    We write $G^\pi(\Omega)$ if we replace the d-separations
	with conditional independences in 
	$\Omega$.

	The Sparsest Permutation (SP) algorithm is a hybrid structure learning method: it searches through the space of all orderings to minimize the edge count (which plays the role of a score) of DAGs induced by
	the orderings via the constraint-based rule \eqref{eq:ordrule}; that is, it finds
	\[
	\pi^* =\argmin_{\pi} |G^\pi(\Omega)|.
	\]
	Under the 
	\textit{Sparsest Markov Representation
	condition} (SMR) --- which requires that 
	for any $G'$ that is Markov to 
	$\mathbb{P}$, either $|G'|>|G|$ or $G'$ is Markov equivalent to $G$ ---
	SP recovers the correct Markov equivalence class of $G$ \citep{Raskutti2013, Solus2020}.
	Since SMR is a necessary condition for restricted-faithfulness \citep{Raskutti2013},
	the correctness of the SP algorithm relies on weaker assumptions than that of PC.

	SP can be implemented as a greedy algorithm because starting from any arbitrary ordering, there always 
	exists a non-increasing (in number of edges) sequence of 
	DAGs that ends up in the 
	correct Markov equivalence class \citep{Solus2020}.
	One particularly efficient greedy
	approach is the Triangle SP (TSP), 
	which moves from one ordering to the other by reverting \textit{covered arrows}, 
	that is, edges in the form of
	$u\to v$ satisfying $\PA(v)=\PA(v)\cup\{u\}$. 
	By repeatedly looking for covered arrow reversals that induce sparser DAGs, TSP will recover a DAG in the target Markov equivalence class \citep{Solus2020}. 
	 
	However,  greedy search can be  computationally burdensome.
	With an arbitrary initialization
	of the ordering,
	it may need to traverse a long  non-increasing sequence of DAGs to reach the target. 
	Moreover, since 
	all members of a Markov equivalence class are connected
	via non-increasing sequences, 	
	greedy search may spend
    steps exploring the same equivalent
	class before moving on to a sparser one.\footnote{To circumvent the problem of bad initialization and getting stuck in an equivalence class in practice, \citet{Solus2020} suggest implementing Greedy TSP with limited search depth $d$ and restarting with different initialization for $r$ times.
	} 
	
	From this perspective, incorporating DNA information can be very useful.
	On the one hand, an order-constraining DNA set reduces the search space and provides better initial orderings; 
	on the other hand, the layering information learned by Algorithm~\ref{alg:realizeio} breaks down the learning problem into smaller sized problems that are easier to tackle. 
	
	Given a valid layering, the true DAGs can be learned by applying SP to the first layer, then adjust all the lower layers by the first layer and repeat this process until the last layer. We call this recursive approach 
    the Layered-SP. 
    
    \begin{algorithm} 
		\caption{Layered-SP}
		\label{alg:DNASP}
		\SetKwInOut{Input}{Input}
		\SetKwInOut{Output}{Output}
		\Input{Constraint-based  algorithm $\mathcal{A}$}
		\Output{A topological ordering}
		$D\gets$ DNA relations learned by Algorithm~\ref{alg:dna} with $\mathcal{A}$\;
		$D'\gets$ an order-constraining subset of $D$ obtained by dropping cycle-inducing DNA relations\;
		$L=(L_1,\ldots, L_m)\gets$ layering deduced from $D$ by  Algorithm~\ref{alg:realizeio}\;
		Run SP on ${L_1}$ with initialization compatible with $D'$,
		and record the output ordering as $\pi_1$\;
		\For{$l=2,\ldots,m$}{
			Run SP on 
			${L_l}$ adjusted for $\cup_{i=1}^{l-1}{L_i}$
			with initialization  compatible with $D'$, 
			and 
			record the output ordering as $\pi_l$
		}
		$\pi\gets [\pi_1,\ldots, \pi_m]$\;
		\Return $\pi$.
	\end{algorithm}
	

    The next shows that our modified approach is correct.
	\begin{theorem}[Layered-SP]\label{thm:iosp}
        Let $G$ be a DAG.
        Suppose observational data 
        are drawn from  a distribution $\mathbb{P}$ that is
        Markov to $G$, 
        DNA-faithful 
        with respect to $(\Omega_\mathcal{A}(\mathbb{P}),\bar\Omega_\mathcal{A}(\mathbb{P}))$, 
        and SMR to $G$. 
        Then the output of  Algorithm~\ref{alg:DNASP}, denoted $\pi$, 
        satisfied 
        $G^\pi\in[G]$.
	\end{theorem}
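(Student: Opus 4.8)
The plan is to prove that the returned ordering $\pi$ is a topological ordering of some member of $[G]$; once that is in hand, reading rule~\eqref{eq:ordrule} with d-separation yields $G^\pi\in[G]$ at once. I first record the structure the algorithm extracts. Because $\mathbb{P}$ is Markov and DNA-faithful with respect to $(\Omega_\mathcal{A}(\mathbb{P}),\bar\Omega_\mathcal{A}(\mathbb{P}))$, Theorem~\ref{thm:dnalearning} gives $D\subseteq D^{\dna}(G)$ for the set $D$ learned in the first line of Algorithm~\ref{alg:DNASP}. Hence, by Theorem~\ref{thm:realizedio}, the partition $L=(L_1,\dots,L_m)$ produced by Algorithm~\ref{alg:realizeio} is a valid layering of $G$; since every relation in $D$ is definite non-ancestral and so holds throughout $[G]$, this layering is valid for every $G'\in[G]$, i.e.\ no member of $[G]$ carries an edge from a later layer into an earlier one. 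In particular $G$ admits a topological ordering traversing the blocks $L_1,\dots,L_m$ in order; I call an ordering with this block structure \emph{layered}.

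I now use the layering to split the SP objective. Throughout I work with the conditional-independence graphs $G^\pi(\Omega)$, which are what SP minimizes and which are Markov to $\mathbb{P}$ by construction. Fix a layered ordering $\pi=[\pi_1,\dots,\pi_m]$ and set $P_l:=\bigcup_{i<l}L_i$. Every edge of $G^\pi(\Omega)$ points forward, so its head lies in a unique layer $L_l$ and its tail in $P_l\cup L_l$. Whether an edge $u\to v$ with $v\in L_l$ is drawn depends only on the conditional (in)dependence of $u$ and $v$ given the $\pi$-predecessors of $v$; as all of $P_l$ precedes $L_l$ in any layered ordering, those predecessors are exactly $P_l$ together with the $L_l$-vertices placed before $v$ by $\pi_l$. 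Consequently the number $f_l(\pi_l)$ of edges of $G^\pi(\Omega)$ with head in $L_l$ is a function of $\pi_l$ and the fixed set $P_l$ alone—insensitive to the internal orderings of the other layers—and $|G^\pi(\Omega)|=\sum_{l=1}^m f_l(\pi_l)$. The summands are functionally independent, so $\min_{\pi\text{ layered}}|G^\pi(\Omega)|=\sum_{l=1}^m\min_{\pi_l}f_l(\pi_l)$, which is precisely what running SP separately on each $L_l$ adjusted for $P_l$ computes.

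It remains to identify this layerwise minimum with the target edge count. Every $G^\pi(\Omega)$ is Markov to $\mathbb{P}$, so SMR gives $|G^\pi(\Omega)|\ge|G|$ for all $\pi$, with equality exactly when $G^\pi(\Omega)\in[G]$. Picking a layered topological ordering $\tilde\pi$ of $G$, the graph $G^{\tilde\pi}(\Omega)$ is a subgraph of $G^{\tilde\pi}=G$ (d-separation implies conditional independence), so $|G^{\tilde\pi}(\Omega)|\le|G|$ and therefore equals $|G|$. Since $\tilde\pi$ is layered, the layerwise minimum $\sum_l\min_{\pi_l}f_l(\pi_l)=\min_{\pi\text{ layered}}|G^\pi(\Omega)|$ is squeezed between the universal lower bound $|G|$ and $|G^{\tilde\pi}(\Omega)|=|G|$, hence equals $|G|$. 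The output $\pi$ attains it, so $|G^\pi(\Omega)|=|G|$ and thus $G^\pi(\Omega)\in[G]$ by SMR. Finally, $\pi$ is by construction a topological ordering of $G':=G^\pi(\Omega)\in[G]$; applying rule~\eqref{eq:ordrule} with d-separation to this ordering, and using that d-separation agrees across $[G]$, returns the same graph, so $G^\pi=G'\in[G]$, as claimed. (The order-constraining set $D'$ and its compatible initializations only affect the efficiency of a greedy SP implementation: by Lemma~\ref{lem:constraints} a $D'$-compatible layered ordering realizing some member of $[G]$ exists, so no correct ordering is ever excluded, and in the exact/population reading the initialization is immaterial.)

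The main obstacle is the decomposition of the second paragraph: one must fix the precise semantics of ``SP on $L_l$ adjusted for $P_l$'' so that its objective counts both the edges internal to $L_l$ and the edges entering $L_l$ from $P_l$, and then verify the functional independence of the $f_l$. That independence hinges entirely on validity of the layering—conditioning on the \emph{whole} of $P_l$ is what erases any dependence of $f_l$ on the orderings of the other layers. Once this separability is secured, the SMR comparison and the recovery identity $G^\pi=G^\pi(\Omega)$ are routine.
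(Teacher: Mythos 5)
Your proof is correct and arrives at the paper's conclusion by a genuinely different route. The paper argues by induction on layers: after invoking Theorems~\ref{thm:dnalearning} and~\ref{thm:realizedio} to get a valid layering (exactly as you do), it asserts that SP on $L_1$ is correct under SMR, that the sparsest graphs Markov to the CI relations among $L_1\cup L_2$ have $L_1$ ordered ``exactly as $\pi_1$,'' and then recurses, holding earlier orderings fixed and conditioning on earlier layers. You instead prove a global separability identity: for any ordering respecting the layering, $|G^\pi(\Omega)|=\sum_l f_l(\pi_l)$, where $f_l$ counts edges with head in $L_l$ and depends only on $\pi_l$ because all of $P_l=\bigcup_{i<l}L_i$ precedes $L_l$; you then squeeze the layered minimum between the SMR lower bound $|G^\pi(\Omega)|\ge|G|$ and the upper bound $|G^{\tilde\pi}(\Omega)|\le|G^{\tilde\pi}|=|G|$ obtained from a layered topological ordering $\tilde\pi$ of $G$, concluding $G^\pi(\Omega)\in[G]$, and you finish with the identification $G^\pi=G^\pi(\Omega)$ (since $\pi$ is a topological ordering of $G^\pi(\Omega)$ and d-separations agree across $[G]$) --- a step the paper leaves implicit. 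Your route buys two things the paper's induction glosses over: first, it avoids the paper's overly strong intermediate claim that every sparsest Markov graph on $L_1\cup L_2$ orders $L_1$ exactly as $\pi_1$ (there may be several optimal orderings of $L_1$; your separability shows the choice is immaterial); second, it makes explicit, as you flag, that ``SP on $L_l$ adjusted for $P_l$'' must be read so that its objective counts the edges entering $L_l$ from $P_l$ as well as the internal ones --- and this is genuinely necessary, since internal counts alone can tie between orderings of $L_l$ whose incoming counts differ, in which case an internal-only objective with unlucky tie-breaking would return an ordering with $G^\pi\notin[G]$. Two shared, standard caveats (present in the paper as well, so not gaps specific to you): your claim that $G^\pi(\Omega)$ is Markov to $\mathbb{P}$ ``by construction'' tacitly uses the intersection property of $\mathbb{P}$, as in the SP literature of \citet{Raskutti2013,Solus2020}; and your parenthetical correctly disposes of the initialization and order-constraining set $D'$, which affect only the efficiency of a greedy implementation, not population correctness.
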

	
	
    

	\subsection{PC algorithm with DNA}\label{sec:DNAPC}
    
    As noted before, the PC algorithm learns a CPDAG from conditional independencies given sets of  increasing sizes. 
    In the PC algorithm, we may leverage DNA information by  excluding non-ancestral neighbors in conditional independence tests. 
	When accessing the independence relation between $u$ and $v$ from the perspective of $u$ in a working skeleton $C$, instead of searching for d-separation sets among $\ADJ(C,u)$,
	we use the following rule:
	\begin{itemize}	    
	    \item If $u\dna v$, search  $
	\ADJ(C,u)\setminus \{w:w\dna u \} $; otherwise, search  
        $\ADJ(C,u)\setminus \{w:w\dna v \text{ and }w\dna u\}$.
	\end{itemize}
	
    In the next lemma we show the correctness of the modifications, and in Figure~\ref{fig:dnapcexamples} we present a concrete example.

    \begin{lemma}[DNA and d-sep]\label{lem:dnapc}
		The version of PC with neighborhood search replaced by the above rule is correct. 
	\end{lemma}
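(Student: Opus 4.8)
The plan is to reduce the claim to the classical correctness argument for the PC algorithm, whose only nontrivial ingredient is that for every non-adjacent pair $(u,v)$ a d-separating set can be found among the current neighbors of $u$ or of $v$. I would first record the standard separator fact: since $G$ is acyclic, for non-adjacent $u,v$ at least one of $v\notin\DE(G,u)$ or $u\notin\DE(G,v)$ holds, and by the local Markov property $\PA(G,u)$ d-separates $u$ and $v$ whenever $v\notin\DE(G,u)$ (symmetrically, $\PA(G,v)$ works when $u\notin\DE(G,v)$). Because no true edge is ever deleted by PC --- adjacent pairs admit no separating set under faithfulness --- the working skeleton $C$ always contains the true skeleton, so $\PA(G,u)\subseteq\ADJ(C,u)$ throughout the run.

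Next I would show that the pruning rule never discards a needed parent. The key observation is that if $w\in\PA(G,u)$ then $w\in\AN(G,u)$, so $w\dna u$ is false by definition of a definite non-ancestral relation (which quantifies over all members of $[G]$, in particular over $G$ itself); hence $\PA(G,u)\cap\{w:w\dna u\}=\emptyset$, and symmetrically at $v$. I would then observe that, from the perspective of $u$, the set $R_u$ removed by the rule is in both branches a subset of $\{w:w\dna u\}$: it equals this set when $u\dna v$, and equals the smaller intersection $\{w:w\dna v\text{ and }w\dna u\}$ otherwise. Consequently $\PA(G,u)\subseteq\ADJ(C,u)\setminus R_u$ survives the restriction, and the symmetric statement holds for $\PA(G,v)$.

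Combining these, for any non-adjacent $(u,v)$ I would pick the endpoint --- say $u$, with $v\notin\DE(G,u)$ --- for which $\PA(G,u)$ is a valid separator; by the previous paragraph this separator lies in the restricted search set, so the modified PC, which still tests all subsets of the restricted neighborhood of the relevant size, finds a separating set and deletes the edge. No true edge is deleted, because restricting the search can only shrink the collection of tested conditioning sets and thus can never manufacture a spurious separator. Hence the modified skeleton phase returns exactly the true skeleton; since the orientation phase is left untouched, the output CPDAG is correct.

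The main obstacle, and the place where the DNA hypotheses enter, is the pruning-safety step: one must verify that every node the rule removes from $u$'s search is genuinely non-ancestral to $u$ in $G$ and therefore not a parent of $u$. This relies on $w\dna u$ referring to all of $[G]$, which rules out $w\in\PA(G,u)$, and it requires the DNA labels used by the rule to be sound (a subset of $D^{\dna}(G)$), exactly as guaranteed by the learning results established earlier. A minor subtlety to present cleanly is the asymmetry of which endpoint's parent set to invoke: the aggressive branch is permissible precisely because $u\dna v$ already forces $v\notin\DE(G,u)$, so that $u$'s own parents are guaranteed to separate and the larger removal $\{w:w\dna u\}$ remains safe.
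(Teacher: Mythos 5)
Your proposal is correct, but it takes a genuinely different route from the paper's. The paper never invokes parent sets: its entire proof is a trimming lemma for arbitrary separators --- if $u,v$ are d-separated by $S$, then they are also d-separated by $S\cap(\AN(u)\cup\AN(v))$ --- established by a path argument: for any path through a discarded node $z\in S\setminus(\AN(u)\cup\AN(v))$, either $z$ is a collider, so the path is blocked by the trimmed set for not containing $z$, or one follows the arrows out of $z$ along the path to a collider $x$, all of whose descendants avoid $\AN(u)\cup\AN(v)$ and hence the trimmed set, so the path is again blocked. This shows every separator the unmodified search could find has a surviving counterpart after deleting common non-ancestors, which directly justifies the symmetric branch of the rule (remove $w$ with $w\dna u$ \emph{and} $w\dna v$). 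You instead prove existence of one specific surviving separator: $\PA(G,u)$ d-separates $u,v$ when $v\notin\DE(G,u)$, the working skeleton always contains the true one so $\PA(G,u)\subseteq\ADJ(C,u)$, and pruning is parent-safe since $w\in\PA(G,u)$ rules out $w\dna u$. Each approach buys something. The paper's lemma is a stronger structural fact about \emph{all} separators, independent of which sets the algorithm happens to test. Your route is more elementary, and --- notably --- it explicitly covers the aggressive branch: when $u\dna v$ the rule removes all of $\{w:w\dna u\}$, which may include nodes in $\AN(G,v)\setminus\AN(G,u)$ that sit in $\ADJ(C,u)$ via false-positive working edges, and the union-trimming lemma alone does not place the trimmed separator inside that restricted set. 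Your observation that $u\dna v$ forces $v\notin\DE(G,u)$, so that $\PA(G,u)$ both separates and survives the larger removal, is exactly what closes this case; in that respect your write-up is arguably more complete on the asymmetric branch than the paper's terse proof, which states only the union-ancestor trimming.
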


	It is worth noting that if $u\dna v$ and $v\dna u$, then $u$ and $v$ are non-adjacent in the true DAG. 
	However, the orientation step of PC requires the d-separator of $u,v$, and hence
	we cannot simply remove the edge $u-v$ without searching and testing.
	
	 	\begin{figure}
 		\centering
 		\begin{tikzpicture}[> = stealth,shorten > = 1pt, auto,node distance = 1cm, semithick]
 		
 		\tikzstyle{every state}=[
 		draw = black,
 		thick,
 		fill = white,
 		minimum size = 2mm
 		]
 		
 		\node[c1] (1) at(0,0) {\small1};
 		\node[c1] (2) at(4.5,0) {\small2};
 		\node[c1] (3) at(3.75,-0.6) {\small3};
 		\node[c1] (4) at(3,0) {\small4};
 		\node[c1] (5) at(2.25,-0.6) {\small5};
 		\node[c1] (6) at(1.5,0) {\small$u$};
 		
 		\draw[->,line width= 1] (1) -- (6);
 		\draw[->,line width= 1] (2) -- (3);
 		\draw[->,line width= 1] (2) -- (4);
 		\draw[->,line width= 1] (3) -- (4);
 		\draw[->,line width= 1] (4) -- (5);
 		\draw[->,line width= 1] (5) -- (6);
 		\draw[->,line width= 1] (4) -- (6);
 		\end{tikzpicture}
 		 		\caption{Algorithm~\ref{alg:dnasmall}
 		 		with level $K=0$ discovers  $u$ being a sink. When accessing conditional independencies among others, we can exclude $u$ from the search. 
 		 		}
 	\label{fig:dnapcexamples}
 \end{figure}
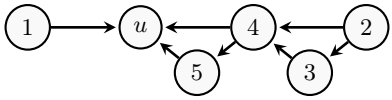

	\section{Numerical Results}\label{sec:numerical}
	    We examine the 
	performance of structure learning algorithms
	augmented by DNA. 
	We generate 200 
	random Erd\H{o}s-Renyi DAGs
	with $p=10$
	and expected neighborhood size 
	$s$ from 2 to 7. For each DAG, we
	build a linear SEM with coefficients
	drawn uniformly from $\pm 
	[0.3,1]$  and \iid Gaussian error with 
	variance drawn uniformly from $[1,2]$.

	We first compare the population versions of SP and PC with their DNA-modified versions. We plug in the conditional independence set 
	$\Omega_\lambda(\mathbb{P})=\{(i,j,S): |\rho(i,j|S)|\leq \lambda\}$ with $\lambda =\{10^{-2},10^{-3}\}$ where $\rho$ is the partial correlation. We report the number of conditional independence tests performed by each method, as well as their recovery rate, which is the proportion of times that the correct Markov equivalence class is recovered. 
	A higher recovery rate under fixed $\lambda$ means the method is less demanding with respect to faithfulness conditions, and is likely  more statistically efficient.  
	The results are shown in Figure~\ref{fig:recover}.
	The DNA version of both SP and PC have  higher recovery rate, showcasing the improvement on statistical efficiency 
	provided by our proposed algorithms. The number of conditional independence tests also highlight the computational gains by augmenting SP and PC with DNA. 
	Notably, even with low learning levels ($K=0$), the DNA modifications significantly reduce the total number of tests performed, especially when the graph is moderately sparse.
	Higher learning level ($K=1$) 
	provides more improvement, though it can increase the number of tests in some settings. 
	
	We also compare the sample versions of SP, PC and their DNA-modified versions. 
	To prevent false discoveries in the DNA Algorithm~\ref{alg:dnasmall}, we picked a large threshold for the d-connection step ($\lambda'=0.2$). 
	All other tests in SP and PC were performed
	at level $\lambda =0.02$.
	We draw 10000 samples from each SEM and use them to infer the CPDAG. We report the  
    recovery rate as well as the number of tests performed. 
	As expected, DNA provides improvement over both SP and PC when the underlying truth is moderately sparse, and the improvement is more significant for SP.  
	On the other hand, when the underlying true DAG is sparse, DNA could not provide much improvement, and false discoveries in DNA may hinder the performance. 
	
	\begin{figure}
	    \centering
	    \includegraphics[width=\linewidth]{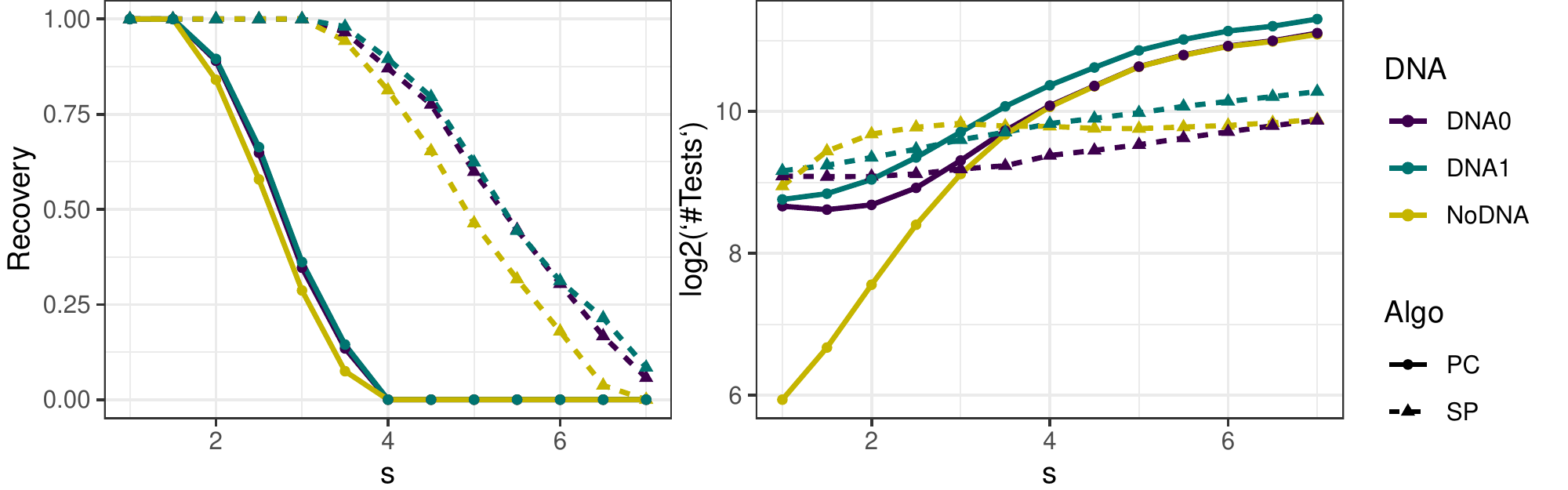}
	    \\
	    \includegraphics[width=\linewidth]{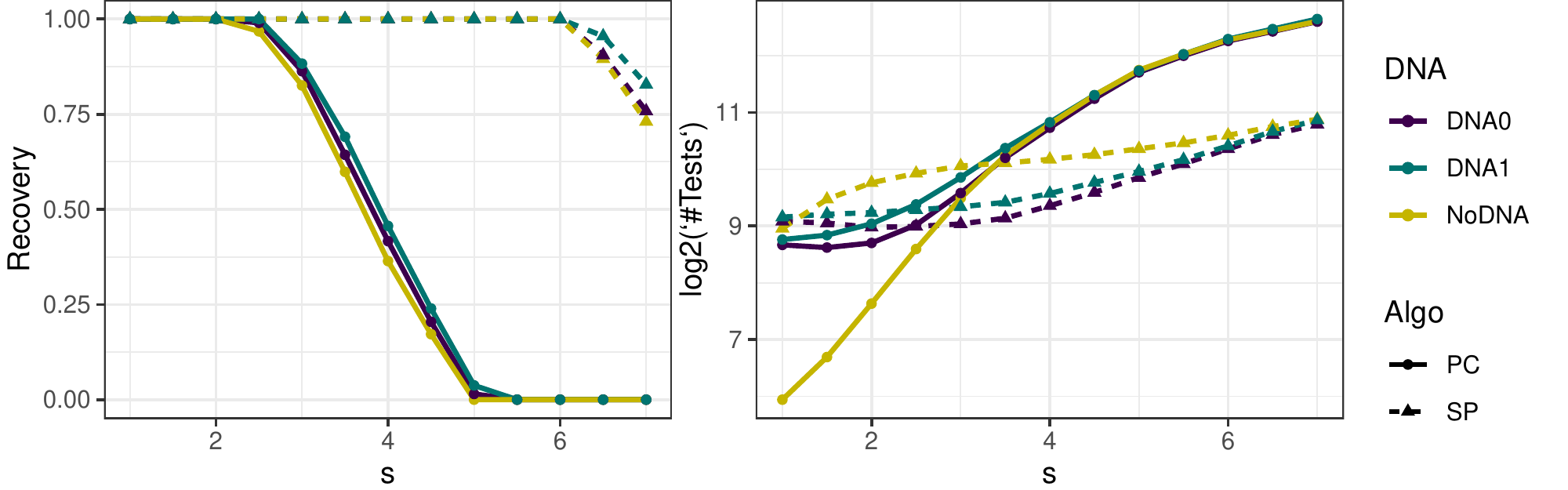}
	    \caption{Recovery rate (Left) and
	    Number of conditional independence tests (right) of the population version of SP, PC and their DNA modifications 
	    for random ER graphs with $p=10$ nodes, expected neighborhood size $s$ (x-axis), 
	    and $\lambda =0.01$ (top row) and $\lambda=0.001$ (bottom row).
	    }
	    \label{fig:recover}
	\end{figure}
	
	
	\begin{figure}
	    \centering
	    \includegraphics[width=\linewidth]{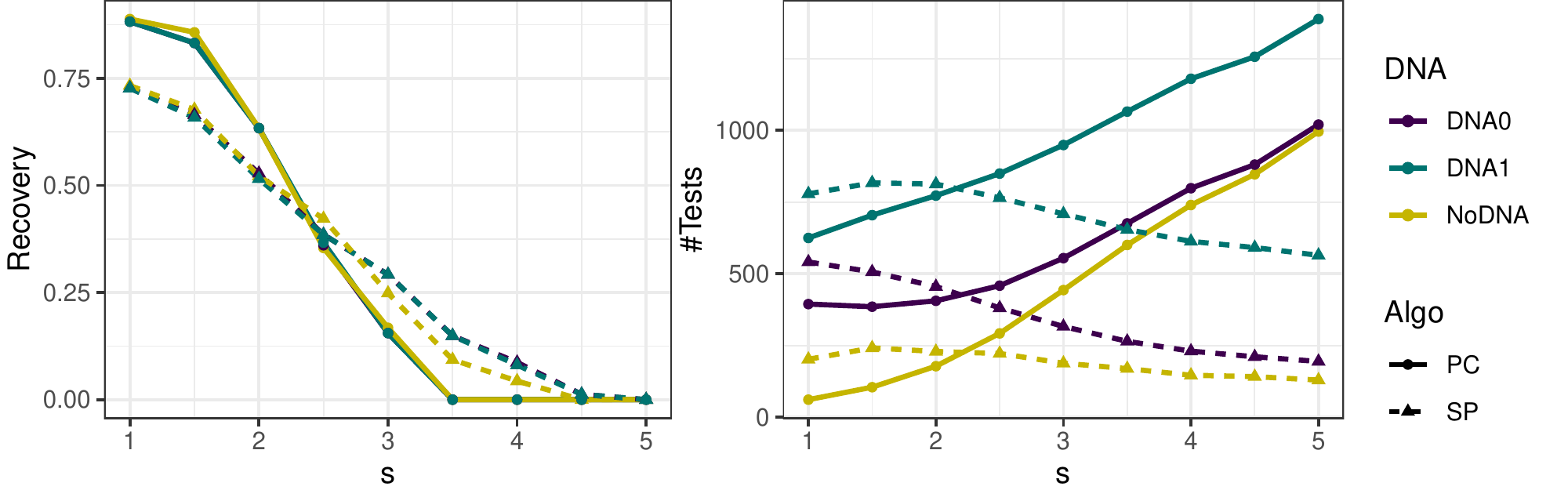}
	    \caption{Recovery rate (Left) and
	    Number of conditional independence tests (right) of the sample version of SP, PC and their DNA modifications 
	    performed on $n=10000$ samples from random ER graphs with $p=10$ nodes and
	     expected neighborhood size $s$ on x-axis. 
	    }
	    \label{fig:sample}
	\end{figure}

	\section{Conclusion}
	We introduced definite non-ancestral (DNA) relations as intermediate targets of inference in structure learning. 
	DNA relations can be learned from simple conditional independencies and lead to computational and statistical gains in DAG structure learning. DNA applications in graphs with latent variables would be interesting area of future research. 
	
	\section{Acknowledgements}
	This work has received funding from the U.S.\ National Institutes of Science (NSF) and of Health (NIH) under grants DMS-1161565, DMS-1561814, and R01GM114029, and from the European Research Council (ERC) under the European Union’s Horizon 2020 research and innovation programme (grant agreement No 883818)
	
	\bibliographystyle{apalike}
	\bibliography{bib}

\begin{thebibliography}{}

\bibitem[Andersson et~al., 1997]{Andersson1997}
Andersson, S.~A., Madigan, D., and Perlman, M.~D. (1997).
\newblock A characterization of {M}arkov equivalence classes for acyclic
  digraphs.
\newblock {\em The Annals of Statistics}, 25(2):505--541.

\bibitem[Andersson and Perlman, 2006]{Andersson2006}
Andersson, S.~A. and Perlman, M.~D. (2006).
\newblock Characterizing markov equivalence classes for {AMP} chain graph
  models.
\newblock {\em The Annals of Statistics}, 34(2):939--972.

\bibitem[Cai et~al., 2011]{Cai2011}
Cai, T., Liu, W., and Luo, X. (2011).
\newblock A constrained $\ell$-1 minimization approach to sparse precision
  matrix estimation.
\newblock {\em Journal of the American Statistical Association},
  106(494):594--607.

\bibitem[Claassen and Heskes, 2011]{Claassen2011}
Claassen, T. and Heskes, T. (2011).
\newblock A logical characterization of constraint-based causal discovery.
\newblock In {\em Proceedings of the Twenty-Seventh Conference on Uncertainty
  in Artificial Intelligence}, UAI'11, page 135–144, Arlington, Virginia,
  USA. AUAI Press.

\bibitem[Claassen et~al., 2013]{claassen2013}
Claassen, T., Mooij, J.~M., and Heskes, T. (2013).
\newblock Learning sparse causal models is not {NP}-hard.
\newblock In {\em Proceedings of the Twenty-Ninth Conference on Uncertainty in
  Artificial Intelligence}, UAI'13, pages 172--181, Arlington, Virginia, United
  States. AUAI Press.

\bibitem[Dor and Tarsi, 1992]{dor1992}
Dor, D. and Tarsi, M. (1992).
\newblock A simple algorithm to construct a consistent extension of a partially
  oriented graph.
\newblock {\em Technicial Report R-185, Cognitive Systems Laboratory, UCLA}.

\bibitem[Entner et~al., 2013]{entner2013}
Entner, D., Hoyer, P., and Spirtes, P. (2013).
\newblock Data-driven covariate selection for nonparametric estimation of
  causal effects.
\newblock In Carvalho, C.~M. and Ravikumar, P., editors, {\em Proceedings of
  the Sixteenth International Conference on Artificial Intelligence and
  Statistics}, volume~31 of {\em Proceedings of Machine Learning Research},
  pages 256--264, Scottsdale, Arizona, USA. PMLR.

\bibitem[Ghoshal and Honorio, 2018]{Ghoshal2018}
Ghoshal, A. and Honorio, J. (2018).
\newblock Learning linear structural equation models in polynomial time and
  sample complexity.
\newblock In {\em International Conference on Artificial Intelligence and
  Statistics}, page 1466–1475. PMLR.

\bibitem[Harris and Drton, 2013]{harris2013}
Harris, N. and Drton, M. (2013).
\newblock {PC} algorithm for nonparanormal graphical models.
\newblock {\em J. Mach. Learn. Res.}, 14(1):3365--3383.

\bibitem[Kalisch and B\"{u}hlmann, 2007]{kalisch2007}
Kalisch, M. and B\"{u}hlmann, P. (2007).
\newblock Estimating high-dimensional directed acyclic graphs with the
  {PC}-algorithm.
\newblock {\em J. Mach. Learn. Res.}, 8:613--636.

\bibitem[Maathuis et~al., 2019]{gm_handbook}
Maathuis, M., Drton, M., Lauritzen, S., and Wainwright, M. (2019).
\newblock {\em Handbook of Graphical Models}.
\newblock Chapman \& Hall/CRC Handbooks of Modern Statistical Methods. CRC
  Press, Boca Raton, FL.

\bibitem[Magliacane et~al., 2017]{Magliacane_Claassen_Mooij_2017}
Magliacane, S., Claassen, T., and Mooij, J.~M. (2017).
\newblock Ancestral causal inference.
\newblock {\em arXiv:1606.07035 [cs, stat]}.
\newblock arXiv: 1606.07035.

\bibitem[Manzour et~al., 2021]{Manzour2021}
Manzour, H., K\"{u}{\c{c}}\"{u}kyavuz, S., Wu, H.-H., and Shojaie, A. (2021).
\newblock Integer programming for learning directed acyclic graphs from
  continuous data.
\newblock {\em {INFORMS} Journal on Optimization}, 3(1):46--73.

\bibitem[Meek, 1995]{Meek1995}
Meek, C. (1995).
\newblock Causal inference and causal explanation with background knowledge.
\newblock In {\em Proceedings of Eleventh Conference on Uncertainty in
  Artificial Intelligence}, page 403–418. Morgan Kaufmann.

\bibitem[Pearl, 2009]{Pearl09}
Pearl, J. (2009).
\newblock {\em Causality}.
\newblock Cambridge University Press, Cambridge, second edition.
\newblock Models, reasoning, and inference.

\bibitem[Perkovi\'c et~al., 2018]{Perkovic2018}
Perkovi\'c, E., Textor, J., Kalisch, M., and Maathuis, M.~H. (2018).
\newblock Complete graphical characterization and construction of adjustment
  sets in {M}arkov equivalence classes of ancestral graphs.
\newblock {\em Journal of Machine Learning Research}, 18(220):1–62.

\bibitem[Ramsey et~al., 2006]{Ramsey2006}
Ramsey, J., Zhang, J., and Spirtes, P.~L. (2006).
\newblock Adjacency-faithfulness and conservative causal inference.

\bibitem[Raskutti and Uhler, 2013]{Raskutti2013}
Raskutti, G. and Uhler, C. (2013).
\newblock Learning directed acyclic graphs based on sparsest permutations.

\bibitem[Ravikumar et~al., 2011]{ravikumar2011}
Ravikumar, P., Wainwright, M.~J., Raskutti, G., and Yu, B. (2011).
\newblock High-dimensional covariance estimation by minimizing l1 -penalized
  log-determinant divergence.
\newblock {\em Electron. J. Statist.}, 5:935--980.

\bibitem[Solus et~al., 2020]{Solus2020}
Solus, L., Wang, Y., and Uhler, C. (2020).
\newblock Consistency guarantees for greedy permutation-based causal inference
  algorithms.
\newblock {\em arXiv preprint}.
\newblock arXiv: 1702.03530.

\bibitem[Spirtes et~al., 2000]{sprites2000}
Spirtes, P., Glymour, C., and Scheines, R. (2000).
\newblock {\em Causation, prediction, and search}.
\newblock MIT Press, Cambridge, MA.

\bibitem[Squires et~al., 2020]{Squires2020}
Squires, C., Amaniampong, J., and Uhler, C. (2020).
\newblock Efficient permutation discovery in causal {DAGs}.
\newblock {\em arXiv preprint}.
\newblock arXiv:2011.03610.

\bibitem[Uhler et~al., 2013]{Uhler2013}
Uhler, C., Raskutti, G., Bühlmann, P., and Yu, B. (2013).
\newblock Geometry of the faithfulness assumption in causal inference.
\newblock {\em Annals of Statistics}, 41(2):436–463.

\bibitem[Zhang, 2008]{zhang2008}
Zhang, J. (2008).
\newblock On the completeness of orientation rules for causal discovery in the
  presence of latent confounders and selection bias.
\newblock {\em Artif. Intell.}, 172(16-17):1873--1896.

\end{thebibliography}
	
	\clearpage
	\section{Appendix}
	
	\subsection{Proofs}
		\begin{proof}[Proof of Lemma~\ref{lem:cpdagda}]
 	    We first show the $\Rightarrow$ direction of the first statement. 
 		Suppose $u\rightsquigarrow v$
 		and $u$ does not have a directed arrow into $A_{uv}$. 
 		Suppose $A_{uv}$ is fully connected (or a singleton) and
 		define $B=\{a\to u: a\in A_{uv}\}$.
 		If the CPDAG has an extension that is consistent with $B$, then $u\notin\AN(v)$ in
 		this extension;  for a definition of extension, see e.g. \citep{dor1992}.  
 		We  apply the background knowledge  algorithm
        \citep{Perkovic2018},
        which is guaranteed to be  successful if such an extension exists. 
        Since $A_{uv}$ is fully connected,  at each step, 
        Meek's rules never orient  edges between nodes in $A_{uv}$; 
        therefore, also never
        orient any edge from $u$ into $A_{uv}$, 
        and the algorithm can enforce $B$
        without causing any conflict.
        But this means there exists a DAG in the Markov equivalence class (MEC) in which 
        $u\notin\AN(v)$. 
        Therefore we conclude the  $\Rightarrow$ direction of statement 1. 
 		The $\Leftarrow$
 		direction: 
 		since $A_{uv}$ is not a clique, 
 		there must be two nodes 
 		$a,a'\in A_{u,v}$ that are
 		non-adjacent and $(a,u,a')$ is not a v-structure. 
 		Then every DAG in the MEC
 		must have either
 		$u\to a$ or $u\to a'$, and
 		therefore a directed path to $v$.
 		
 		Now we show the second statement. 
 		The $\Rightarrow$ direction: if there exists a possibly 
 		directed path from $u$ to $v$, then 
 		there must be an unshielded possibly
 		directed path. However,
 		this means the 
 		 first edge on this path is 
 		oriented out of $u$ 
 		in some DAG in the MEC, in which case $u\in\AN(v)$. The $\Leftarrow$ direction follows directly from our definition. 
 	\end{proof}
 	
 		\begin{proof}[Proof of Lemma~\ref{lem:dna}]
		In the first statement,
		$x$ must block some path $\pi=(u,\ldots,s,x,t,\ldots,v)$ that is  d-connected given  $W$. Therefore $x$ is a non-collider on $\pi$, and the two subpaths $\pi_{xu}=(x,s\ldots,u)$ 
		and $\pi_{xv}=(x,t,\ldots,v)$
		are both d-connected 
		given $W$.  
		In each DAG of the MEC, 
		we can pick $\pi_{xu}$ or 
		$\pi_{xv}$, whichever starts with an outgoing arrow (one of them must do so, since $x$ is not a collider) and follow  non-collider arrows until we either reach $\{u,v\}$ or encounter a collider which is unblocked by $W$ (meaning it is ancestral to $W$). In the first case $x\in\AN(u\cup v)$ and in the second case $x\in\AN(W)$. 
		
		In the second statement, 
		$x$ must unblock some path
		$\pi=(u,\ldots,t,\ldots,v)$  that is blocked by $W$, 
		where $t$ is a collider on $\pi$
		and ancestor of $x$ (or $t=x$). Note that $t$ (and also $x$) is d-connected to both $u$ and $v$ given $W$. For this reason $x$ must not be ancestral 
		to $W$. 
		On the other hand, if $x$ is ancestral to $u$, then 
		there exists a directed path  $\pi'=(x,\ldots,u)$. 
		But then we obtain a d-connecting path between $u,v$ given $W$
		by gluing together $\pi'$, the directed path $(x,\ldots,t)$ and the subpath of $\pi$ from $t$ to $v$, which is a contradiction. 
	\end{proof}

		\begin{proof}[Proof of  Theorem~\ref{thm:dnalearning}, \ref{thm:forward} and \ref{thm:popback}]
		These three results are direct consequences of 
		Lemma~\ref{lem:dna} and
		DNA-faithfulness with respect to the corresponding conditional independence (CI) statements checked by the learning steps. 
	\end{proof}

	\begin{proof}[Proof of Theorem~\ref{thm:sample}]
		With the stated sample size, we may apply the error propagation computation in Lemma 1 of \cite{ravikumar2011} and in Lemma 6 of \cite{harris2013}, which gives the following bound:
		\begin{equation}
		\PP{\max_{i\neq j, |S|\leq K+1}\big| \rho(i,j|S)-\widehat\rho(i,j|S) \big|\geq \lambda/2}\leq
		\zeta.
		\end{equation}
		Consequently, with probability at least $1-\zeta$, the sets 
		$\Omega_{\lambda}^{\uparrow K}(\mathbb{P})$ and $\bar\Omega_{\lambda}^{ K+1}(\mathbb{P})$ are correctly inferred from the data, and therefore the DNA output is correct. 
	\end{proof}


	\begin{proof}[Proof of Theorem~\ref{thm:sampleback}]
	    The low-dimensional result 
	    can be obtained from an error propagation computation that is entirely analogous to the one from the proof of Theorem~\ref{thm:sample} above. In this case, we have 
	    $\PP{\|\Sigma^{-1}-S_n^{-1}\|_\infty>\lambda^*/2}\leq \zeta$ and consequently all moral graphs are correctly inferred from data. 
	    In the high-dimensional case, the theory for 	    the CLIME estimator $\widehat\Omega$ guarantees that $\norm{\widehat \Omega-\Sigma^{-1}}_\infty \leq 4\norm{\Sigma^{-1}}_1 \lambda_n$ for
	    $\lambda_n\geq \norm{\Sigma^{-1}}_1\norm{\Sigma - S_n}_\infty $ \cite{Cai2011}. The moral graphs of subgraphs are also correct \cite[see Lemma 5 of][]{Ghoshal2018}.
	    In both cases, the assumption on non-zero entries in the inverse covariance matrix guarantees that non-sinks will not be misspecified as sink. 
	    Therefore the algorithm is correct by Theorem~\ref{thm:popback}.
	\end{proof}
	
	\begin{proof}[Proof of Lemma~\ref{lem:constraints}]
	To show the first statement:
	Let $\pi_0$ be an arbitrary ordering of $G$ and suppose  $u\dna v$ is discordant with $\pi_0$. We claim we can swap the ordering to obtain a new ordering that is compatible with $G$ and
	$u\dna v$.
	We write $\pi_0=(X,u,Y,v,Z)$. 
	Denote $A=Y\cap \AN(v)$. Since $u\dna v$, we also have
	$u\dna A$. 
	Since $\pi_0$ is valid for $G$, there is no edge between $u$ and $A$, no edge between $Y\setminus A$ and $v$, and all edges between $A$ and $Y\setminus A$ are in the form of $A\to Y\setminus A$. So the new ordering $\pi'_0=(X,A,v,u,Y\setminus A,Z)$ is valid for $G$. 
	If $D$ is order-constraining, 
	then applying the swap operation above does not create new discordant pairs, and therefore an ordering can be swapped according to $D$ until it agrees with both $D$ and $[G]$. 
	\end{proof}
	
	\begin{proof}[Proof of Theorem~\ref{thm:realizedio}]
    In the output 
    $L=(L_1,\ldots,L_m)$, for each 
    $k=2,\ldots, m$, 
    it holds that 
    $v\dna \cup_{i=1}^{k-1} L_i$ for all 
    $v\in L_k$. Consequently, all edges between $L_k$ and layers preceding it must be directed into $L_k$ in $G$. Hence, $L$ satisfies the requirements to be a valid layering of $G$. 
	\end{proof}
	
	\begin{proof}[Proof of Theorem~\ref{thm:iosp}]
    Under the Markov and the DNA-faithfulness assumption, 
    $D$ is a DNA set of $G$ and, thus, by Theorem~\ref{thm:realizedio},
    $L$ is a  layering of $G$. It is now sufficient to show the DAG can be learned by recursively applying SP. 
	   We prove this claim by induction.
	   Under SMR, no graph on $L_1$ sparser than the subgraph of $G$ over $L_1$
       is Markov 
	   to the pattern of conditional independence (CI) relations among $L_1$. Therefore the output of SP on $L_1$ is consistent with the target MEC. 
	   Moving on to $L_1\cup L_2$, the sparsest graphs that are Markov to the CI relations among $L_1\cup L_2$ have $L_1$ ordered exactly as $\pi_1$. Hence, the optimal ordering of $L_2$ can be in an optimization that holds $\pi_1$ fixed and considers the joint distribution conditional on $X_{L_1}$. The general induction steps to the layers after $L_2$ proceed in the same way. 
	\end{proof}
	
	\begin{proof}[Proof of Lemma~\ref{lem:dnapc}]
	We show if  $u,v$ are d-separated by $S$, then they are also d-separated by $S\cap (\AN(u)\cup\AN(v))$.
	Let	$T=S\setminus (\AN(u)\cup\AN(v))$.
	The d-separation relation implies that every path between $u$ and $v$
	either has a non-collider in $S$ or a collider not in $S$ whose descendants are also not in $S$. 
	Consider an arbitrary path $\pi$ between $u$ and $v$.  If 
	$\pi$ does not go through $T$, then $\pi$ is blocked by $S\setminus T$.
	If $\pi$ contains some  $z\in T$ and $z$ is a collider on $\pi$, 
	then $\pi$ is blocked by $S\setminus T$ for not including $z$.
	If $z$ is not a collider on $\pi$, 
	then we can follow the arrows from $z$ on $\pi$  until we reach a collider, call it $x$, i.e., $(z,\ldots, x,\ldots)$ is a subpath of $\pi$. 
	Since $x\in \DE(z)\subset T$ and $x$ is a collider on $\pi$ that is not included in $S\setminus T$, the set $S\setminus T$ d-separates  $u,v$.
	\end{proof}

	\subsection{DNA Learning with low learning levels}\label{sec:numioscore}
	In this section we demonstrate 
	that a large proportion of DNA relations can be learned by  Algorithm~\ref{alg:dnasmall} with low 
	learning levels. 
	We generate 1000 random Erd\H{o}s-Renyi or power-law DAGs, 
	and then run Algorithm~\ref{alg:dnasmall} 
	 at learning level $K=0,1$
	 with the conditional independence oracle.  We then deduce a layering of $G$ using  Algorithm~\ref{alg:realizeio}. 
	We report 
	the proportion of all DNA learned and
	the proportion of edges in $G$ that lie
	between the learned 
	layers. 
	A high proportion of inter-layer edges means $L$ is informative about $G$.
	
	The results  are shown in  Figure~\ref{fig:ioscore}.
	It is evident that 
	a large proportion of DNA can be learned by 
	Algorithm~\ref{alg:dnasmall} with very early 
	stopping, 
	even if the underlying graphs are large or dense. For this reason we recommend running
	Algorithm~\ref{alg:dnasmall} with small $K=$ (0 or 1). 
	The results also show that 
	the corresponding layering
	discovered by Algorithm~\ref{alg:realizeio}
	are
	most informative when the true DAGs are not too 
	dense nor too sparse. 
	This is due to the characterization of  Lemma~\ref{alg:dna} which relies on 
	a conditional independence and a conditional 
	dependence: if too dense, there are not enough independences; and if too sparse, there are not enough dependences. 
	
	\begin{figure}[t]
	    \centering
	    \includegraphics[width=\linewidth]{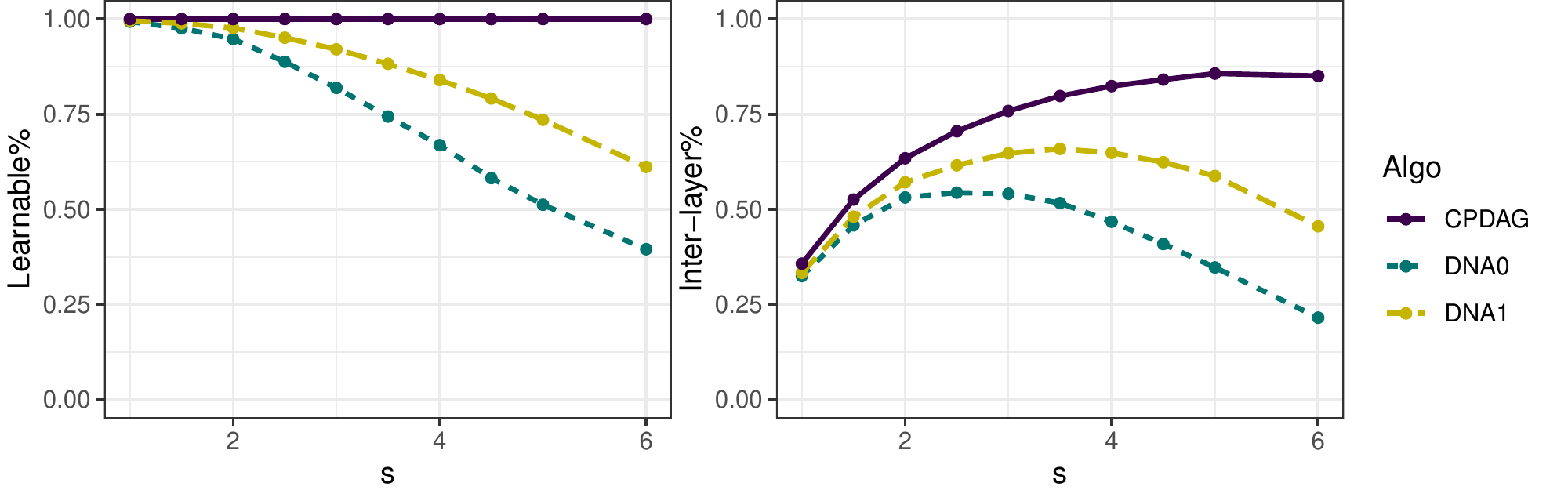}
	    \includegraphics[width=\linewidth]{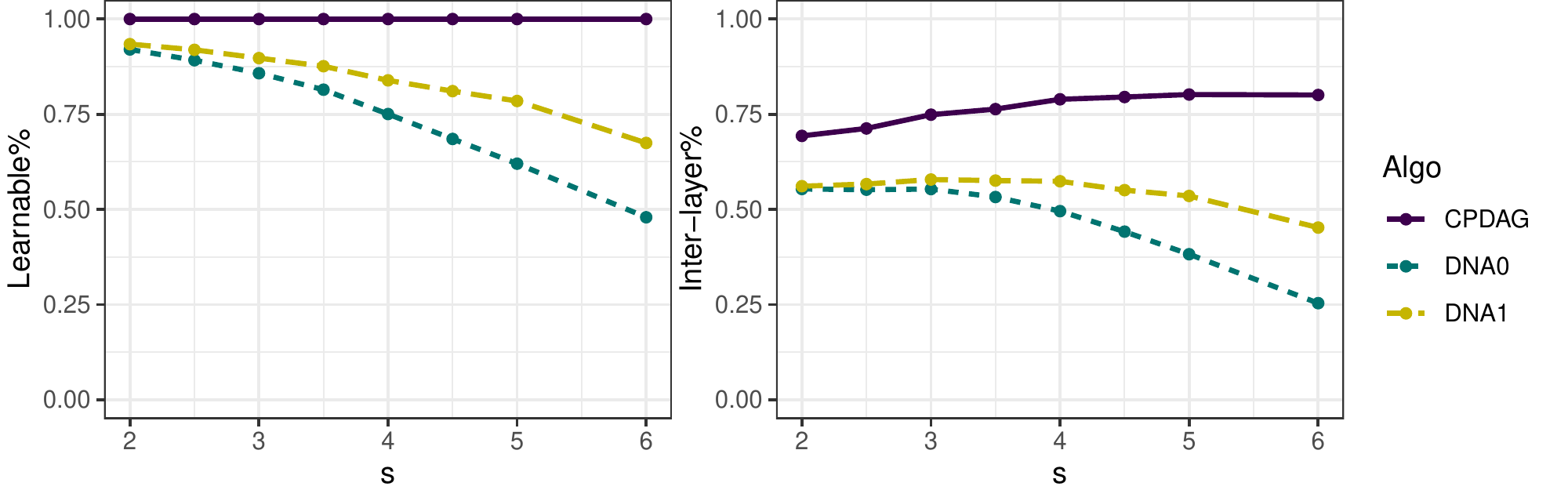}
	    \caption{
	    Proportion of DNA learned (left column),
	    and inter-layer edges (right column)
	    for Algorithm~\ref{alg:dnasmall} with levels 
	    0 and 1 in random Erd\H{os}-Renyi graphs (top row) and power-law graphs (bottom row) with $p=10$  vertices. The x-axis is average-node degree. }
	    \label{fig:ioscore}
	\end{figure}

\end{document}